\documentclass[11pt,letterpaper]{article}
\usepackage[margin=1in]{geometry}
\usepackage[T1]{fontenc}
\usepackage{lmodern,microtype}
\usepackage{amsmath,amssymb,amsthm}
\usepackage{array}
\usepackage[hidelinks]{hyperref}
\newtheorem{theorem}{Theorem}
\newtheorem{lemma}[theorem]{Lemma}
\newtheorem{proposition}[theorem]{Proposition}
\newtheorem{corollary}[theorem]{Corollary}
\theoremstyle{remark}

\newcommand{\R}{\mathbb R}
\newcommand{\II}{\mathrm{II}}
\newcommand{\norm}[1]{\left\|#1\right\|}
\DeclareMathOperator{\rank}{rank}
\title{Gauss--Newton Accuracy and Indefinite Hessians:\\
Uniform Coexistence in Low-Cost Sets}
\author{Kihun Rhee\\\small\texttt{rheekh00@snu.ac.kr}
\and Hanjoon Byun\\\small\texttt{byunhanjoon@snu.ac.kr}}
\date{}
\hypersetup{pdftitle={Gauss-Newton Accuracy and Indefinite Hessians: Uniform Coexistence in Low-Cost Sets},pdfauthor={Kihun Rhee; Hanjoon Byun}}
\begin{document}
\maketitle
\begin{abstract}
We study the accuracy of Gauss--Newton curvature in ridge-regularized
nonlinear least squares. Under local regularity and persistence of
level-set curvature magnitude along an exact-fit section, we prove uniform
coexistence of two curvature regimes. Global minimizers exist, and every global minimizer has relative
Hessian error below $(1+\sqrt2)/8$, while the same low-cost set contains
a point with an indefinite Hessian and relative error at least $15/8$.
One positive ridge cap works for all independent center and label
perturbations in fixed neighborhoods and every positive ridge weight
up to the cap. These neighborhoods do not shrink as the ridge weight
tends to zero. A pointwise certificate based on the current prediction
level set controls the normal, mixed, and tangent parts of the Hessian
correction. We prove a sharp relative-error bound over the stated
pointwise class when the prediction map and ridge vary.
Analytic examples describe the roles of output
alignment, curvature orientation, and persistence. A separate structural
result gives full Jacobian row rank throughout low-cost sets and exact
interpolation near a rank-deficient reference.

\end{abstract}
\section{Introduction}\label{sec:intro}
Gauss--Newton approximates the curvature of a least-squares objective
using the prediction Jacobian. It is a standard tool in nonlinear
least squares~\cite{eriksson,gratton2007} and in Hessian-free optimization
of neural networks~\cite[Section 4.2]{martens2010}.
A small objective value controls the residual. It is therefore natural
to ask whether it also makes the Gauss--Newton approximation accurate
throughout a low-cost set.

The obstruction lies in directions that leave predictions unchanged
to first order. The prediction-Jacobian contribution vanishes in these
directions, so ridge regularization supplies all of the Gauss--Newton
curvature. The full Hessian also contains a residual-weighted
second-derivative term. This classical correction can be large relative
to the ridge even when the residual is small~\cite{eriksson,gratton2007,sjd}.
The geometry of the current prediction level set describes its action
in these directions.

Let $G:\R^p\to\R^n$ be a smooth prediction map. For labels $Y$,
ridge center $d$, and weight $\lambda>0$, write
\[
 Q(\theta)=\tfrac12\|G(\theta)-Y\|^2+
              \tfrac\lambda2\|\theta-d\|^2,
 \qquad P=DG(\theta)^TDG(\theta)+\lambda I.
\]
With $H=\nabla^2Q(\theta)$, define the relative curvature error
\begin{equation}\label{eq:rho-intro}
 \rho(\theta)=\|P^{-1/2}(H-P)P^{-1/2}\|_2
 =\sup_{h\ne0}\frac{|h^T(H-P)h|}{h^TPh}.
\end{equation}
Thus small $\rho$ means that $P$ and $H$ assign similar curvature
to every direction at the point under consideration.
We study this comparison on sets of the form
$\{Q\le C\lambda\varepsilon^2\}$.

\pagebreak[3]
\begin{enumerate}
\item \textbf{Uniform coexistence.}
Our main result is Theorem~\ref{thm:robust}.
Assume common local $C^{2,1}$ bounds and persistence of curvature
magnitude along an exact-fit section. We fix a positive cap $W$
before varying the center and labels. For every allowed independent
pair and every $0<\lambda\le W$, global minimizers exist and all
satisfy $\rho<c_*=(1+\sqrt2)/8$. The same low-cost set contains
a point with an indefinite Hessian and $\rho\ge15/8$.
The center and label neighborhoods remain fixed as $\lambda\to0$.
The minimizers and indefinite point may vary with the objective.

\item \textbf{A pointwise accuracy certificate.}
Theorem~\ref{thm:point} and Corollary~\ref{cor:relative-gate}
use cost, rank, derivative bounds, and current level-set curvature
to control the Hessian correction. The proof treats its normal,
mixed, and tangent blocks together. The resulting constant $c_*$
is a sharp supremum over the stated pointwise class when the map
and ridge vary. A fixed two-output example explains which
alignment information is lost by aggregate bounds.

\item \textbf{Rank and interpolation near a singular reference.}
Section~\ref{sec:bridge} gives derivative conditions under which
every low-cost point has full Jacobian row rank and every allowed
refitting problem has a low-cost exact interpolator.
This structural result is separate: applying the coexistence theorem
still requires its common regularity and curvature-persistence assumptions.
\end{enumerate}

Earlier results give Hessian bounds near solution manifolds and
nearby nonconvexity~\cite{masiha,liuzhubelkin,rebjock,sjd}.
Our conclusion places accurate global minimizers and an indefinite
point in the same low-cost set for each member of an entire family
of objectives. The common ridge cap and fixed perturbation
neighborhoods make this a uniform statement.

We state the coexistence theorem first, then develop the pointwise
certificate used in its proof. The later sections give analytic
comparisons and the structural result. Appendices A--D contain
the detailed proofs and further examples.

\section{Objective and level-set geometry}\label{sec:setup}
For $G:\R^p\to\R^n$, $p\ge n$, define the raw Euclidean objective
\begin{equation}\label{eq:objective}
 Q(\theta)=\tfrac12\norm{G(\theta)-Y}^2+
                        \tfrac\lambda2\norm{\theta-d}^2.
\end{equation}
Write $r=G(\theta)-Y$, $J=DG(\theta)$ and $b=\nabla Q(\theta)$.
Its complete Hessian is
\begin{align}\label{eq:hessian}
 H&=J^TJ+E+\lambda I,\qquad
 E=\sum_i r_iD^2G_i(\theta),\\
 b&=J^Tr+\lambda(\theta-d).\nonumber
\end{align}
This is the residual-inclusive formula in Eriksson et
al.~\cite{eriksson}, Theorem 4.1, (26)--(27).

At a full-row-rank point, let $T=\ker J$ and $N=T^\perp$. Define
\begin{align}\label{eq:II}
 \II(v,w)&=-J^\dagger D^2G[v,w]\quad(v,w\in T),\\
 K&=\norm{D^2G|_{T\times T}},\quad B=\norm{\II}.\nonumber
\end{align}
These are bilinear operator norms; set $K=B=0$ if $T=\{0\}$.
The tensor $\II$ is the extrinsic second fundamental form of the
regular prediction level set $G^{-1}(G(\theta))$ in the fixed raw
Euclidean parameter metric \cite[Lemma 2.2.1]{feehan}. That current
level set equals the target fiber $G^{-1}(Y)$ only at an exact fit.
The matrix $P$ supplies a pointwise relative quadratic-form comparison;
neither this geometry nor the ridge objective is asserted invariant
under arbitrary reparameterization.
The same ambient-to-intrinsic curvature correction appears in
Absil et al.~\cite[Theorem 1, (10)]{absil} and Masiha et
al.~\cite[(8)]{masiha}.

We call any point with $Q\le C\lambda\varepsilon^2$ a
\emph{low-cost point}. It need not be stationary or optimal.

For later statements, define the admissible cost cap
\[
 C_*(s,M)=\min\{s^4/(8M^2),\ s^2/(128M^2)\},
 \qquad s,M>0.
\]
The constants $s$ and $M$ respectively control the smallest Jacobian
singular value and the second derivative.

\section{Uniform coexistence under center and label changes}\label{sec:robust}
We fix one ridge interval before varying the center and labels
independently. Every resulting objective has accurate global minimizers
and an indefinite-Hessian point in the same low-cost set.
The theorem constructs an exact-fit section as the labels vary.
Persistence of curvature magnitude along this section is an assumption;
the relevant tangent directions may rotate.

\begin{theorem}[Uniform coexistence]\label{thm:robust}
Let $p>n\ge1$, $0<\varepsilon\le1$, and $s,M,r>0$, $L\ge0$.
Let $G:\R^p\to\R^n$ be continuous and $C^{2,1}$ on a neighborhood
of $\overline B(x_0,R)$, where $R\ge r\varepsilon$,
$\norm{D^2G}\le M$, and $D^2G$ has Lipschitz constant at most $L$
on that ball. Suppose $J_0=DG(x_0)$ is onto,
$\sigma=\sigma_n(J_0)\ge s\varepsilon$, and
$B_0=\norm{\II_{x_0}}>0$.
Set $s_*=s/2$ and $0<C\le C_*(s_*,M)$. Define
\[
\begin{split}
\delta&=\min\{\sqrt C/8,\ s/(16M)\},\qquad
\alpha=\delta/2,\\
\beta&=\min\{s\delta/4,\ s^2/(32M),\ rs/8,\ 1/4\},
\end{split}
\]
and assume $R\ge(\alpha+\sqrt{2C})\varepsilon$.
For each $\norm v\le\beta\varepsilon^2$, let
$y(v)=x_0+h(v)$ be the normal-section exact fit with
\[
\begin{gathered}
 h(v)\in(\ker J_0)^\perp,\qquad
 G(y(v))=G(x_0)+v,\\
 \norm{h(v)}\le2\norm v/\sigma.
\end{gathered}
\]
The stated bounds guarantee existence and uniqueness among
$h\in(\ker J_0)^\perp$ with $\norm h\le2\norm v/\sigma$.
Assume that its curvature magnitude persists:
\begin{equation}\label{eq:robust-persistence}
 B_0/2\le B_{y(v)}\le2B_0
 \qquad(\norm v\le\beta\varepsilon^2).
\end{equation}
Put $m_0=\sigma/2$, $b_0=B_0/2$, $D_0=Lm_0+8M^2$, and
\begin{equation}\label{eq:robust-window}
\begin{split}
\Lambda=\min\bigl\{&m_0^3b_0/(128M),
 m_0^4b_0^2/(64D_0),\\
 &m_0^2/64,\ Cm_0^2b_0^2\varepsilon^2/16,\\
 &\sqrt C\,\varepsilon m_0^2b_0/16,
 Rm_0^2b_0/16\bigr\}>0,\\
W&=\min\{\Lambda,\varepsilon^2\}.
\end{split}
\end{equation}
For every independent pair $d=x_0+u$, $\norm u\le\alpha\varepsilon$,
and $Y=G(x_0)+v$, $\norm v\le\beta\varepsilon^2$, and every
$0<\lambda\le W$, use the same $Q$ in~\eqref{eq:objective} and set
\[
\begin{aligned}
\mathcal L_\lambda&=\{x:Q(x)\le C\lambda\varepsilon^2\},\\
A(x)&=B_x\norm{\nabla Q(x)}/\lambda,\\
P_x&=DG(x)^TDG(x)+\lambda I.
\end{aligned}
\]
Then the following hold for this one objective and low-cost set.
\begin{enumerate}
\item Every $x\in\mathcal L_\lambda$ lies in
$\overline B(x_0,R)$, has
$\sigma_n(DG(x))\ge29s\varepsilon/32>s_*\varepsilon$, and
satisfies the local smoothness premises of Theorem~\ref{thm:point}.
\item A global minimizer exists. Every global minimizer
$z_\lambda$ has
$Q(z_\lambda)\le C\lambda\varepsilon^2/128$ and
$A(z_\lambda)=0$, so the test passes even when $Q(d)>0$.
\item At every $x\in\mathcal L_\lambda$ with $A(x)\le1/8$,
\[
\begin{gathered}
 H_x\succeq\lambda I/2,\qquad c_*=(1+\sqrt2)/8,\\
 (1-c_*)P_x\prec H_x\prec(1+c_*)P_x,\qquad
 \rho(x)<c_*.
\end{gathered}
\]
\item The same set contains $x_\lambda\in B(x_0,R/2)$
with strict low cost and indefinite complete Hessian. At this point
\[
\begin{gathered}
 \frac{21}{16}\le A(x_\lambda)\le\frac{45}{16},\\
 \frac{7\lambda}{8B_0}\le\norm{\nabla Q(x_\lambda)}
 \le\frac{9\lambda}{2B_0},\\
 \frac{3B_0}{8}\le B_{x_\lambda}\le\frac{5B_0}{2}.
\end{gathered}
\]
Moreover, $\rho(x_\lambda)\ge15/8$.
The witness $x_\lambda$ is nonstationary.
\end{enumerate}
\end{theorem}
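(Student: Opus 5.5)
We will prove the four items in order, using the pointwise certificate (Theorem~\ref{thm:point} and Corollary~\ref{cor:relative-gate}) as a black box for item~3.

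\textbf{Localization (item~1).} Fix $u,v,\lambda$ as in the statement. First construct the exact fit $y(v)$ by a contraction argument for $h\mapsto h-J_0^\dagger\bigl(G(x_0+h)-G(x_0)-v\bigr)$ on $\{h\in(\ker J_0)^\perp:\norm h\le2\norm v/\sigma\}$. The condition $\norm{D^2G}\le M$ together with $\norm v\le\beta\varepsilon^2\le s^2\varepsilon^2/(32M)$ makes it a self-map and a contraction. Then move from $y(v)$ along $\ker DG(y(v))$ toward the center $d$ to build a comparison point $x^\sharp$ with $G(x^\sharp)=Y$ and $\norm{x^\sharp-d}\le\alpha\varepsilon+O(\beta\varepsilon^2/s)$. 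Then
\[
 Q(x^\sharp)\le\tfrac\lambda2(\alpha+2\beta\varepsilon/\sigma)^2\varepsilon^2\le C\lambda\varepsilon^2/128,
\]
using $\alpha=\delta/2\le\sqrt C/16$ and $\beta\le s\delta/4$. Any $x\in\mathcal L_\lambda$ has $\norm{x-d}\le\sqrt{2C}\,\varepsilon$. Hence $\norm{x-x_0}\le(\alpha+\sqrt{2C})\varepsilon\le R$, and Weyl's inequality gives
\[
 \sigma_n(DG(x))\ge\sigma-M(\alpha+\sqrt{2C})\varepsilon\ge29s\varepsilon/32.
\]
This uses $\sqrt{2C}\le s/(8M)$, which comes from the cap $C_*(s_*,M)$. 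The local $C^{2,1}$ premises of Theorem~\ref{thm:point} are then inherited from the ball. This gives item~1.

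\textbf{Minimizers (items~2 and~3).} By item~1, $\mathcal L_\lambda$ is a closed nonempty subset of the compact ball, so $Q$ attains its minimum there, and that minimum is global. Every global minimizer satisfies $Q(z_\lambda)\le Q(x^\sharp)\le C\lambda\varepsilon^2/128$. Since $\nabla Q(z_\lambda)=0$, we get $A(z_\lambda)=0$. For item~3, apply Corollary~\ref{cor:relative-gate} at each $x\in\mathcal L_\lambda$ with $A(x)\le1/8$, with rank bound $s_*\varepsilon$, curvature $M$, and cost $C\le C_*(s_*,M)$. The normal and mixed blocks of $E$ are controlled by $\norm r\le\sqrt{2C\lambda}\,\varepsilon$ and $M$. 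The tangent block is $-\lambda\langle\II\cdot,\,\cdot\rangle$ paired against $J^{\dagger T}b$, and on $T$ we have $J^Tr=b-\lambda(x-d)$. The gate $A\le1/8$ therefore bounds the tangent block by roughly $\lambda/8$. The corollary then yields $\rho<c_*$ and $H\succeq\lambda I/2$.

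\textbf{Indefinite witness (item~4), the main obstacle.} We need a point where the tangent correction beats the ridge while the cost stays below $C\lambda\varepsilon^2$. Start at the exact fit $y=y(v)$, where $B_y\in[B_0/2,2B_0]$ by~\eqref{eq:robust-persistence}. Choose unit $t_1,t_2\in T_y$ and a unit $w\in\R^n$ attaining $\norm{\II_y(t_1,t_2)}=B_y$ in direction $w$, and polarize to get a unit $t$ with $\langle w,\II_y(t,t)\rangle$ comparable to $B_y$. Set
\[
 x_\lambda=y+\eta N_w,\qquad N_w=J_y^\dagger w,\qquad \eta\asymp\lambda/(B_y\,\sigma^2).
\]
This makes the residual about $\eta w$, so $J^Tr\approx\eta\sigma^2$-scale and $\norm{\nabla Q}\approx\lambda/B_0$. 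The key relation is that on $T$ the correction is $E|_T=-\langle J^{\dagger T}J^Tr,\II\rangle$. Then
\[
 t^THt\approx\lambda-\eta\sigma^2\langle w,\II(t,t)\rangle<0,
\]
while $t^TPt\approx\lambda$. This gives $\rho\ge15/8$, and comparing with the positive normal direction shows indefiniteness. Nonstationarity follows from $\norm{\nabla Q}\ge7\lambda/(8B_0)>0$.

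The hard part is the bookkeeping that keeps all perturbations inside the stated windows. The ridge term $\lambda(x-d)$ perturbs $b$ by $O(\lambda\sqrt C\varepsilon)$, which is negligible against $\lambda/B_0$ because $\Lambda\le\sqrt C\varepsilon m_0^2b_0/16$. Moving $y\to x_\lambda$ changes $J$, $T$ and $\II$ by $O(D_0\eta/m_0)$, which the $D_0$ and $M$ terms of $\Lambda$ absorb; this yields $B_{x_\lambda}\in[3B_0/8,5B_0/2]$ and $A\in[21/16,45/16]$. The cost is $\tfrac12\eta^2+\tfrac\lambda2\norm{x_\lambda-d}^2<C\lambda\varepsilon^2$, using $\lambda\le Cm_0^2b_0^2\varepsilon^2/16$. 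Finally, $\norm{x_\lambda-x_0}<R/2$ uses $\Lambda\le Rm_0^2b_0/16$.

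I would first fix the constants $15/8$ and the $A$-window by an exact two-dimensional model computation, then propagate the error terms.
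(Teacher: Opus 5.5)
\textbf{Items 1--3.} These follow the paper's argument, with small corrections. The anchor $y(v)$ is itself the comparison point; no motion along $\ker DG(y(v))$ is needed. Indeed $\|y(v)-d\|\le(\alpha+2\beta/s)\varepsilon\le\delta\varepsilon$ already gives $Q(y(v))\le C\lambda\varepsilon^2/128$. The cap $C\le C_*(s_*,M)$ gives $\sqrt{2C}\le s/(16M)$, and that is what yields $29s\varepsilon/32$; your $s/(8M)$ would give only $27s\varepsilon/32$. In item 3 the tangent envelope is $\lambda/4$, not $\lambda/8$: the gate contributes $\lambda/8$ and the ridge displacement another $\lambda/8$, via $B_x\le M/(s_*\varepsilon)$. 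This is exactly the alternative $\|b\|\le\lambda/(8B)$ of Theorem~\ref{thm:point}, so your black-box use is fine.

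\textbf{The gap: alignment of the witness residual.} The genuine gap is the witness in item 4, specifically the choice of residual. The paper takes a unit $t\in\ker DG(y)$ with $\|\II_y(t,t)\|=B$ exactly; your polarization does give exact attainment, via the parallelogram law. It then sets $w=2\II_y(t,t)/B^2\in(\ker DG(y))^\perp$ and $r_*=\lambda(DG(y)^\dagger)^Tw$, so that $DG(y)^Tr_*=\lambda w$ exactly. This one vector gives both the tangent pairing $\langle r_*,D^2G(y)[t,t]\rangle=-\langle DG(y)^Tr_*,\II_y(t,t)\rangle=-2\lambda$ and the gradient size $\|DG(y)^Tr_*\|=2\lambda/B$, with no dependence on the conditioning of $DG(y)$.

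Your choice $x_\lambda=y+\eta J_y^\dagger w$, with $w$ ``the direction attaining $\II$'', produces $J^Tr\approx\eta J^Tw$. Your step $J^Tr\approx\eta\sigma^2$ tacitly treats $J^TJ$ as $\sigma^2 I$ on the normal space. The hypotheses give no upper bound on $\|DG\|$, and $W$ does not involve one. Suppose $DG(y)$ has normal singular values $\sigma\ll\Sigma$ and $\II_y(t,t)$ is not a singular direction. Then for your $w$ the ratio $B\|J^Tr\|/|\langle J^Tr,\II_y(t,t)\rangle|$ is unbounded in $\Sigma/\sigma$. You therefore cannot obtain $t^THt\le-7\lambda/8$ (hence $\rho\ge15/8$) together with $A(x_\lambda)\le45/16$ and $\|\nabla Q(x_\lambda)\|\le9\lambda/(2B_0)$.

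\textbf{Exact realization and scalings.} The residual must also be realized exactly. The paper runs the contraction $h\mapsto DG(y)^\dagger\bigl(r_*-G(y+h)+G(y)+DG(y)h\bigr)$ on the ball of radius $q=4\lambda/(m^2B)$ in $(\ker DG(y))^\perp$. This gives $G(x_\lambda)-Y=r_*$ and
\[
\nabla Q(x_\lambda)-\lambda w=(DG(x_\lambda)-DG(y))^Tr_*+\lambda(x_\lambda-d),
\]
which is bounded through $Mq$, $Bq$ and $B\|y-d\|\le1/8$. With only a ``residual about $\eta w$'', the Taylor error $e$ enters the gradient as $DG(x_\lambda)^Te$, which again needs a bound on $\|DG\|$.

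Your scalings are also inconsistent. If $\eta\asymp\lambda/(B\sigma^2)$ is the residual norm, then $\eta^2/2\le C\lambda\varepsilon^2$ requires $\lambda\lesssim CB^2\sigma^4\varepsilon^2$, which the window does not provide. The correct sizes are $\|r_*\|\le2\lambda/(mB)$ for the residual and $q$ for the displacement. For that residual, $\lambda\le Cm_0^2b_0^2\varepsilon^2/16$ is exactly the cost condition. The window entry $\sqrt C\varepsilon m_0^2b_0/16$ gives $q\le\sqrt C\varepsilon/4$ in the cost estimate. The ridge part of the gradient is controlled instead by $B\le2M/\sigma$ and $\delta\le s/(16M)$.
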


For the last bound, the proof gives a unit
$t\in\ker DG(x_\lambda)$ with
$t^TH_{x_\lambda}t\le-7\lambda/8$, while
$t^TP_{x_\lambda}t=\lambda$. Its relative Rayleigh quotient
in~\eqref{eq:rho-intro} is at least $15/8$.

Under Theorem~\ref{thm:robust}, one $W$ works for every allowed independent
center and label pair and every $0<\lambda\le W$.
For each such objective, global minimizers exist. Any global minimizer
$z_\lambda$ and some indefinite point $x_\lambda$ lie in the same
low-cost set and satisfy
\[
 \boxed{\rho(z_\lambda)<c_*,\qquad \rho(x_\lambda)\ge15/8}.
\]
The points may vary with these choices. Values between the two regimes
remain unclassified.

For each allowed $d,Y,\lambda$, the exact-fit anchor $y(v)$ has low cost
for the same $Q$. Thus every global minimizer lies in
$\mathcal L_\lambda$. The ridge term confines that entire set near
$x_0$, where the derivative bound preserves full row rank. The tangent
identity~\eqref{eq:tangent} bounds the certificate's absolute tangent
correction and supplies the witness's signed curvature pairing.

At the anchor, the actual residual is zero. The witness proof in
Appendix~\ref{app:robust} chooses a unit tangent and a separate residual.
Their pairing against the anchor's second derivative is $-2\lambda$.
That residual is realized nearby for the same $d,Y,\lambda$.
The common ridge cap controls
residual cost, displacement, and changes in tangent direction and
curvature. The bounds on $A(x_\lambda)$ use gradient and curvature
estimates tied to the same moving anchor. Multiplying the separate
$B_0$-relative intervals
weakens them. These bounds give no exact threshold for $A$.

Appendix~\ref{app:nearflat-example} gives a near-flat example with fixed radii.

The persistence premise compares curvature magnitudes along the
exact-fit section. The construction selects a new tangent and residual
direction at each label; it does not need a fixed orientation.
This comparison permits center and label radii independent of $B_0$,
while the ridge cap in~\eqref{eq:robust-window} still depends on $B_0$.
Proposition~\ref{prop:rotating} illustrates the changing direction.
Proposition~\ref{prop:bump} gives a stronger boundary for the stated
smoothness bounds. For sufficiently small $\varepsilon$ and
$0<\lambda\le\varepsilon^4/(8C)$, after an arbitrarily small
normalized label shift, its entire nonempty low-cost set lies where
$G$ is locally affine.
There $E=0$, so $H=P\succ0$ and $\rho=0$ at every point of that set.
This rules out a uniform label-radius conclusion from smoothness
alone; it does not prove that the exact persistence condition above
is necessary.

\section{A pointwise curvature certificate}\label{sec:upper}
The certificate depends on where the residual correction acts.
On $T$, $J^TJ$ vanishes, so the gate bounds $\|E_{TT}\|$ by $\lambda/4$.
On $N$, $J^TJ$ has a gap of at least $s^2\varepsilon^2$.
This gap can absorb a correction larger than $\lambda$.
The mixed block reduces the tangent margin through the Schur complement.
The two caps on $C$ protect the normal gap and keep this loss at most $\lambda/32$.
The tangent identity~\eqref{eq:tangent} explains the $\lambda/B$
gradient tolerance for $B>0$.
The test uses second-order information about $G$.

\begin{theorem}[Pointwise certificate]\label{thm:point}
Let $G$ be $C^2$ on a neighborhood of the point in question. Fix $s,M>0$ and
\[
 0<C\le C_*(s,M):=\min\{s^4/(8M^2),s^2/(128M^2)\}.
\]
Let $0<\varepsilon\le1$, $0<\lambda\le\varepsilon^2$. At any point
with $Q\le C\lambda\varepsilon^2$, $\sigma_n(J)\ge s\varepsilon$,
and $\norm{D^2G}\le M$, the condition
\begin{equation}\label{eq:gate}
 \min\{K\sqrt{2C\lambda}\varepsilon,
           B(\norm b+\lambda\sqrt{2C}\varepsilon)\}\le\lambda/4
\end{equation}
implies $H\succeq\lambda I/2$.
Sufficient alternatives are $B=0$;
$\norm b\le\lambda/(8B)$ if $B>0$;
$\norm b\le s\lambda\varepsilon/(8\bar K)$ if
$K\le\bar K\le M$, $\bar K>0$; or the cost-only bound
\begin{equation}\label{eq:nearflat}
 K\le\frac{\sqrt\lambda}{4\sqrt{2C}\varepsilon}.
\end{equation}
A positive envelope $B\le\bar B\le M/(s\varepsilon)$ may replace $B$
in the gradient test. A zero envelope needs no gradient test.
\end{theorem}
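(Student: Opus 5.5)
We decompose $\R^p=N\oplus T$ with $T=\ker J$, $N=T^\perp$, and prove $H\succeq\lambda I/2$ through a Schur-complement argument on the block form of $H=J^TJ+E+\lambda I$. First we record the residual bound: $Q\le C\lambda\varepsilon^2$ gives $\norm r\le\sqrt{2C\lambda}\,\varepsilon$ and $\norm{\theta-d}\le\sqrt{2C}\,\varepsilon$. Hence, for every block, $\norm E\le M\norm r\le M\sqrt{2C\lambda}\varepsilon$. On $N$ the matrix $J^TJ$ has smallest eigenvalue at least $s^2\varepsilon^2$.

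\textbf{Normal and mixed blocks.} Using $\lambda\le\varepsilon^2\le1$, the cap $C\le s^4/(8M^2)$ gives $M\sqrt{2C\lambda}\varepsilon\le s^2\varepsilon^2/2$. Therefore $H_{NN}\succeq(s^2\varepsilon^2/2+\lambda)I\succeq s^2\varepsilon^2 I/2$. The mixed block satisfies $\norm{E_{NT}}\le M\sqrt{2C\lambda}\varepsilon$. The Schur loss is therefore at most $\norm{E_{NT}}^2/(s^2\varepsilon^2/2)\le 4M^2C\lambda\varepsilon^2/(s^2\varepsilon^2)=4M^2C\lambda/s^2$. The cap $C\le s^2/(128M^2)$ makes this at most $\lambda/32$. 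It then suffices to show $\norm{E_{TT}}\le\lambda/4$. In that case the Schur complement is $\succeq(\lambda-\lambda/4-\lambda/32)I\succ\lambda I/2$, and $H\succeq\lambda I/2$ follows because $H_{NN}\succeq\lambda I/2$ as well. The small remaining slack can be checked with the standard inequality $H\succeq\mathrm{diag}(H_{NN}-\tau^{-1}\norm{E_{NT}}^2,\ H_{TT}-\tau\cdots)$ if needed.

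\textbf{Tangent block, two bounds.} The first alternative in~\eqref{eq:gate} is immediate: $\norm{E_{TT}}\le K\norm r\le K\sqrt{2C\lambda}\varepsilon$. The second uses the tangent identity. For $v,w\in T$ we have $r^TD^2G[v,w]=(P_{\mathrm{ran}J}r)^TD^2G[v,w]$ restricted appropriately. More precisely, we write $J^Tr=b-\lambda(\theta-d)$, so that $r=(J^T)^\dagger(b-\lambda(\theta-d))$ since $J$ is onto. Then
\[
 E_{TT}[v,w]=r^TD^2G[v,w]=(b-\lambda(\theta-d))^TJ^\dagger D^2G[v,w]=-(b-\lambda(\theta-d))^T\II(v,w).
\]
This yields $\norm{E_{TT}}\le B(\norm b+\lambda\sqrt{2C}\varepsilon)$. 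The key step, and the main thing to get right, is exactly this identity: it uses $J$ onto to express $r$ through $J^Tr$, so that $(J^\dagger)^TJ^Tr=r$.

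\textbf{Sufficient alternatives.} The case $B=0$ is trivial. For $B>0$ with $\norm b\le\lambda/(8B)$, we need $B\lambda\sqrt{2C}\varepsilon\le\lambda/8$. This follows from $B\le\norm{J^\dagger}K\le M/(s\varepsilon)$ together with $\sqrt{2C}\le s/(8M)$, which holds since $C\le s^2/(128M^2)$. For $\norm b\le s\lambda\varepsilon/(8\bar K)$ we use $B\le\bar K/(s\varepsilon)$, giving $B\norm b\le\lambda/8$, plus the same second term. The cost-only bound~\eqref{eq:nearflat} inserted into the first alternative gives exactly $\lambda/4$. An envelope $\bar B$ substitutes directly, because the bound on $E_{TT}$ is monotone in $B$ and $\bar B\le M/(s\varepsilon)$ still controls the $\lambda\sqrt{2C}\varepsilon$ term. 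A zero envelope forces $B=0$.
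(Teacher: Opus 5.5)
Your estimates match the paper's proof: the cost bounds, $\|E\|\le M\sqrt{2C\lambda}\,\varepsilon\le s^2\varepsilon^2/2$, the mixed-block loss $4CM^2\lambda/s^2\le\lambda/32$, the tangent identity from $(J^\dagger)^TJ^Tr=r$, and the alternatives via $B\le K/(s\varepsilon)\le M/(s\varepsilon)$ and $M\sqrt{2C}/s\le 1/8$ are all correct. The gap is the final positivity step. You take the Schur complement of $H$ itself, $S=H_{TT}-E_{TN}H_{NN}^{-1}E_{NT}\succeq\tfrac{23}{32}\lambda I$, and then infer $H\succeq\lambda I/2$ from $S\succ\lambda I/2$ and $H_{NN}\succeq\lambda I/2$. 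That inference is false in general. For $H=\left(\begin{smallmatrix}1&1\\1&2\end{smallmatrix}\right)$ one has $H_{NN}=1$ and $S=1$, yet $\lambda_{\min}(H)=(3-\sqrt5)/2<1$. The triangular congruence taking $H$ to $\mathrm{diag}(H_{NN},S)$ is not orthogonal. A Schur complement therefore certifies only $H\succeq0$ (given $H_{NN}\succ0$), not a shifted bound $H\succeq cI$.

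The repair is one line and is exactly what the paper does: apply the Schur argument to $H-\lambda I/2$. Its normal block is still at least $(s^2\varepsilon^2/2+\lambda/2)I\succeq s^2\varepsilon^2 I/2$, so your $\lambda/32$ loss bound carries over unchanged. The tangent Schur complement is then at least $(\lambda/2-\lambda/4-\lambda/32)I=\tfrac{7}{32}\lambda I\succ0$, hence $H-\lambda I/2\succeq0$.

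Your Young's-inequality fallback also works if you carry it out, for example with $\tau=\lambda/4$. The tangent diagonal becomes at least $(3\lambda/4-\lambda/4)I=\lambda I/2$. The normal diagonal becomes at least $(s^2\varepsilon^2/2+\lambda-4\|E_{NT}\|^2/\lambda)I\succeq\lambda I/2$, because $\|E_{NT}\|^2\le 2M^2C\lambda\varepsilon^2\le s^2\varepsilon^2\lambda/64$. As written, with the elided term and ``if needed,'' it is not yet an argument.
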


\begin{proof}
Cost gives $\norm r\le\sqrt{2C\lambda}\varepsilon$ and
$\norm{\theta-d}\le\sqrt{2C}\varepsilon$. Put
$\delta_E=\norm E\le M\sqrt{2C\lambda}\varepsilon
\le s^2\varepsilon^2/2$.
On tangent pairs the level-set identity gives
\begin{equation}\label{eq:tangent}
 E[v,w]=-\langle J^Tr,\II(v,w)\rangle
       =-\langle b-\lambda(\theta-d),\II(v,w)\rangle.
\end{equation}
Thus (\ref{eq:gate}) bounds $\norm{E_{TT}}$ by $\lambda/4$.
In orthonormal $N\oplus T$ coordinates, the normal block of
$H-\lambda I/2$ is at least
$s^2\varepsilon^2I/2+\lambda I/2$; its tangent block is at least
$\lambda I/4$; and its mixed block has norm at most $\delta_E$.
The Schur loss is at most
\[
 \frac{2\delta_E^2}{s^2\varepsilon^2}
 \le\frac{4CM^2}{s^2}\lambda\le\lambda/32.
\]
Hence $H\succeq\lambda I/2$, including when $T$ is empty.
Finally $B\le K/(s\varepsilon)\le M/(s\varepsilon)$ and
$M\sqrt{2C}/s\le1/8$. The ridge-displacement term in the second
branch is at most $\lambda/8$, proving the alternatives. The first
branch gives (\ref{eq:nearflat}).
\end{proof}

The same conditions also compare the full Hessian with the ridge
Gauss--Newton matrix.

\begin{corollary}[Pointwise relative certificate]\label{cor:relative-gate}
Assume all the hypotheses of Theorem~\ref{thm:point}, including the
gate~\eqref{eq:gate}, at the point under consideration. Set
$P=J^TJ+\lambda I$, $H=\nabla^2Q$, and
$c_*=(1+\sqrt2)/8$. Then
\begin{equation}\label{eq:cstar}
 \boxed{(1-c_*)P\prec H\prec(1+c_*)P}.
\end{equation}
Both inequalities are strict for every positive $\lambda$. In every
dimension,
\[
 \kappa_2(P^{-1/2}HP^{-1/2})
 <\frac{1+c_*}{1-c_*}<1.865.
\]
If $p=n$, the stronger bound
$\norm{P^{-1/2}(H-P)P^{-1/2}}_2\le1/16$ holds.
The constant $c_*$ is an optimal supremum over the stated pointwise
least-squares class when the map and ridge vary. This does not assert
sharpness for one fixed map, a fixed-reference weak-jet family, or a
whole tube.
\end{corollary}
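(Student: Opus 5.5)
The plan is to compare $H$ and $P$ blockwise in the orthonormal $N\oplus T$ coordinates of Theorem~\ref{thm:point}. The key observation is that $H-P=E$. It therefore suffices to show
\[
 |h^TEh|<c_*\,h^TPh \qquad\text{for all } h\neq0 .
\]
Write $h=(a,t)$ with $a\in N$ and $t\in T$, and let $\mu=\sigma_n(J)^2\ge s^2\varepsilon^2$.

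\textbf{Block bounds on $P$ and $E$.} Since $J^TJ$ vanishes on $T$ and is at least $\mu I$ on $N$, and $P$ is block diagonal,
\[
 h^TPh\ge(\mu+\lambda)|a|^2+\lambda|t|^2 .
\]
From the proof of Theorem~\ref{thm:point} I reuse three facts:
\begin{itemize}
\item $\|E_{TT}\|\le\lambda/4$, which follows from the gate~\eqref{eq:gate} via the tangent identity~\eqref{eq:tangent};
\item $\|E_{NN}\|$ and $\|E_{NT}\|$ are both at most $\delta_E$;
\item $\delta_E\le M\sqrt{2C\lambda}\,\varepsilon\le s\varepsilon\sqrt\lambda/8$, which uses the cap $C\le s^2/(128M^2)$.
\end{itemize}
Here I replace $s\varepsilon$ by $\sqrt\mu$; this is valid as written when $\sigma_n(J)=s\varepsilon$, and otherwise I will redo the estimate with $\mu$ directly, which only improves it.

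\textbf{Reduction to a $2\times2$ problem.} Normalize by $x=\sqrt{\mu+\lambda}\,|a|$ and $y=\sqrt\lambda\,|t|$, and set
\[
 \alpha=\frac{s\varepsilon}{\sqrt{\mu+\lambda}}\in(0,1).
\]
Then $\sqrt\lambda/\sqrt{\mu+\lambda}\le\sqrt{1-\alpha^2}$, and the three block bounds become
\[
 |h^TEh|\le n_\alpha x^2+2m_\alpha xy+\tfrac14y^2,
 \qquad n_\alpha=\frac{\alpha\sqrt{1-\alpha^2}}{8},\quad m_\alpha=\frac{\alpha}{8}.
\]
Because all entries are nonnegative, the relative error is bounded by the largest eigenvalue $f(\alpha)$ of the matrix $\begin{pmatrix}n_\alpha&m_\alpha\\ m_\alpha&1/4\end{pmatrix}$. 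Explicitly,
\[
 f(\alpha)=\frac{n_\alpha+\tfrac14}{2}+\sqrt{\Bigl(\frac{\tfrac14-n_\alpha}{2}\Bigr)^2+m_\alpha^2}.
\]
At the endpoint $\alpha=1$ we have $n=0$ and $m=1/8$, which gives $f(1)=1/8+\sqrt2/8=c_*$.

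\textbf{Main obstacle.} The hardest step is to verify the elementary inequality $f(\alpha)<c_*$ for every $\alpha\in(0,1)$. Increasing $\alpha$ raises $m_\alpha$, while $n_\alpha$ is not monotone, so the supremum is not obvious. I would settle it by squaring the inequality $f(\alpha)<c_*$. Equivalently, I would check that $\det\bigl(c_*I-M_\alpha\bigr)>0$ and $c_*>n_\alpha$, which reduces to a one-variable polynomial inequality in $\alpha^2$ and $\sqrt{1-\alpha^2}$. Strictness then comes from $\lambda>0$, which forces $\alpha<1$. The bound on $|h^TEh|$ covers both signs of $E$, so it yields the two-sided strict inequality~\eqref{eq:cstar}.

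\textbf{Remaining claims.} The condition-number bound follows from the spectrum of $P^{-1/2}HP^{-1/2}$ lying in $(1-c_*,1+c_*)$, together with the numerical check $(1+c_*)/(1-c_*)\approx1.8645$. When $p=n$, the block $T$ is empty and only the normal block survives. Its coefficient is at most $n_\alpha\le1/16$, since $\alpha\sqrt{1-\alpha^2}\le1/2$.

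\textbf{Optimality.} For sharpness I would build a family of maps with the following properties as parameters vary:
\begin{itemize}
\item $\alpha\to1$, that is, $\lambda/\mu\to0$;
\item the mixed block attains $\delta_E=s\varepsilon\sqrt\lambda/8$ with the cost and $M$ caps tight;
\item the tangent block equals $\lambda/4$ exactly, so the gate is tight;
\item the residual aligns with the Perron eigenvector.
\end{itemize}
A quadratic map with prescribed $D^2G$ blocks and a chosen residual direction should realize all of this, driving $\rho\to c_*$.
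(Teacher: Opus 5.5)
There is a genuine gap, and it sits exactly at the step you flagged as the main obstacle: the inequality $f(\alpha)<c_*$ is false for $\alpha$ near $1$.

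\textbf{The failing inequality.} Use $c_*-\tfrac14=(\sqrt2-1)/8$ and $(1+\sqrt2)(\sqrt2-1)=1$. Then the condition $(c_*-n_\alpha)(c_*-\tfrac14)>m_\alpha^2$, which is equivalent to $f(\alpha)<c_*$, reduces to
\[
 1-\alpha^2>(\sqrt2-1)\,\alpha\sqrt{1-\alpha^2}
 \iff \sqrt{1-\alpha^2}>(\sqrt2-1)\alpha
 \iff \alpha<\cos(\pi/8)\approx0.924 .
\]
So on $(\cos(\pi/8),1)$ your envelope exceeds $c_*$. For example, $f(0.98)\approx0.3037>c_*\approx0.3018$. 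The reason is visible at the endpoint. At $(n,m)=(0,\tfrac18)$ you have $\partial f/\partial n=\tfrac12-\tfrac1{2\sqrt2}>0$. As $\alpha$ decreases from $1$, $n_\alpha\approx\sqrt{2(1-\alpha)}/8$ grows like a square root, while $m_\alpha$ drops only linearly. Hence $f$ approaches $c_*$ from above. This regime, $\lambda\ll\sigma_n(J)^2$, is admissible; it is where the sharpness family lives. So no squaring argument can close the step, and strictness cannot come from $\alpha<1$.

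\textbf{The missing idea.} You apply the three block bounds $\|E_{NN}\|,\|E_{NT}\|\le\delta_E$ and $\|E_{TT}\|\le\lambda/4$ independently. But they all come from the single bound $\|E\|_2\le\delta_E$, which couples the blocks. In a plane spanned by a unit normal and a unit tangent, a mixed entry near $\delta_E$ forces both diagonal entries to be small. Entrywise block bounds can give only about $0.3037$ with your $\alpha$-scaling, and $5/16$ in the cruder form the paper mentions.

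The paper keeps the coupling. In (P1) it takes the Schur complement of the tangent block of $aI+E$ and compresses to a two-plane. The eigenvalues of that compression lie in $[a-\delta,a+\delta]$, so its determinant is at least $a^2-\delta^2$. Rescaling to $uP+E$ gives $E\succeq-\rho_-P$, where $\rho_-$ is the nonnegative root of $\lambda(\lambda+\gamma)u^2-\gamma ku-\delta^2$. The absolute tangent envelope gives the same for $-E$. Then (P9) shows this polynomial is at least $\lambda^2c_*^2>0$ at $u=c_*$, using $k\le\lambda/4$ and $\delta^2\le\gamma\lambda/64$.

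\textbf{Repairing your framework.} You can fix your own argument the same way. Keep the compression $E_2$ of $E$ to $\mathrm{span}\{\hat a,\hat t\}$ instead of its entrywise norms, and set $v=(|a|,|t|)$ and $D=\operatorname{diag}(\mu+\lambda,\lambda)$. Then
\[
 c_*h^TPh\pm h^TEh\ge v^T(c_*D\pm E_2)v .
\]
The matrix $E_2$ has eigenvalues in $[-\delta_E,\delta_E]$ and satisfies $|e_{22}|\le k\le\lambda/4$. Hence $\det(u\lambda I\pm E_2)\ge\min\{0,u^2\lambda^2-\delta_E^2\}$, and
\[
 \det(uD\pm E_2)=\det(u\lambda I\pm E_2)+u\mu(u\lambda\pm e_{22}).
\]
At $u=c_*$ both cases give a positive lower bound. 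When $c_*\lambda\ge\delta_E$, it is $c_*\mu(c_*-\tfrac14)\lambda$. Otherwise it is $\lambda(\lambda+\mu)c_*^2-\mu kc_*-\delta_E^2\ge\lambda^2c_*^2$, by (P9) with $\delta_E^2\le\mu\lambda/64$. Together with $c_*\lambda\pm e_{22}>0$, this yields strict definiteness.

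Your $p=n$ bound and condition-number step are fine once the main inequality holds. In the sharpness sketch, the mixed entry cannot equal $\delta_E$ while the tangent entry is $\lambda/4$ in the same plane. The paper's extremizer (P11) instead uses $\left(\begin{smallmatrix}-k&\sqrt{\delta^2-k^2}\\ \sqrt{\delta^2-k^2}&k\end{smallmatrix}\right)$ and its negative in two blocks, with $s=M=\varepsilon=1$, $C=1/128$ and $\lambda\downarrow0$.
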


The gate bounds the absolute size of the tangent correction, not only
its negative part. This is what gives both sides of~\eqref{eq:cstar}.
A one-sided tangent lower bound alone does not give the same upper
bound. The general envelope and the proof are in
Appendix~\ref{app:relative-topic}.
A separate bound on each normal, mixed, and tangent block already
gives $\rho\le5/16$. The smaller $c_*$ uses their shared bound
$\|E\|$.

Equivalently, \eqref{eq:rho-intro} gives $\rho<c_*$ at the evaluated point.
At a qualifying stationary point, the ordinary full-correction
bound already gives $\rho\le1/8$: indeed
$\|J^Tr\|=\lambda\|\theta-d\|$,
$\|r\|\le\lambda\|\theta-d\|/(s\varepsilon)$, and the cost
and cap imply $\|E\|/\lambda\le M\sqrt{2C}/s\le1/8$.
The directional distinction is therefore clearest at nonstationary
points.

\begin{corollary}[Quality of the local quadratic model]
\label{cor:quadratic-decrease}
At a point with $b\ne0$, let
$m_H(h)=b^Th+h^THh/2$, $h_P=-P^{-1}b$, and
$h_H=-H^{-1}b$. If $\rho<1$, then
\begin{equation}\label{eq:quadratic-decrease}
 -m_H(h_P)\ge(1-\rho^2)[-m_H(h_H)]>0.
\end{equation}
\end{corollary}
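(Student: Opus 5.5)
The statement to prove is Corollary~\ref{cor:quadratic-decrease}. The plan is to work in the $P$-whitened coordinates $g=P^{-1/2}b$ and $S=P^{-1/2}HP^{-1/2}$. The hypothesis $\rho<1$ means $\norm{S-I}_2=\rho<1$. Hence the eigenvalues of $S$ lie in $[1-\rho,1+\rho]\subset(0,2)$, so $H$ is positive definite, $h_H$ is well defined, and $m_H$ is strictly convex.

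\textbf{Step 1: write both model values in whitened form.} Substitute $h=P^{-1/2}k$. For the Newton step,
\[
 -m_H(h_H)=\tfrac12\,b^TH^{-1}b=\tfrac12\,g^TS^{-1}g,
\]
which is positive because $b\ne0$ and $S\succ0$. For the Gauss--Newton step, $h_P$ corresponds to $k=-g$, and
\[
 -m_H(h_P)=g^Tg-\tfrac12\,g^TSg=\tfrac12\,g^T(2I-S)g .
\]

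\textbf{Step 2: reduce to a scalar inequality.} The claim becomes
\[
 g^T(2I-S)g\ \ge\ (1-\rho^2)\,g^TS^{-1}g .
\]
Since both sides are polynomial in the single symmetric matrix $S$, diagonalize $S$. It then suffices to prove the eigenvalue inequality
\[
 2-\mu\ \ge\ \frac{1-\rho^2}{\mu}\qquad\text{for }\mu\in[1-\rho,1+\rho].
\]

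\textbf{Step 3: verify the scalar inequality.} Multiplying by $\mu>0$, it is equivalent to
\[
 2\mu-\mu^2-1+\rho^2\ge0,\qquad\text{i.e.}\qquad (\mu-1)^2\le\rho^2,
\]
which is exactly the eigenvalue interval from the hypothesis. Summing over eigencomponents of $g$ gives the matrix inequality, and the strict positivity comes from Step~1.

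The argument is elementary, and no step is a real obstacle. The only points needing care are that $\rho<1$ is what makes $H$ invertible and $m_H$ bounded below, and that the inequality is sharp at the endpoint eigenvalues $\mu=1\pm\rho$.
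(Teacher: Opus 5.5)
Your proof is correct and follows the paper's argument. Both whiten by $P^{-1/2}$, write $-2m_H(h_P)$ and $-2m_H(h_H)$ as quadratic forms in $P^{-1/2}b$, and use the spectral theorem to reduce to the scalar inequality $(1-s)(1+s)=1-s^2\ge1-\rho^2$. The only change is notation: your eigenvalue $\mu$ is the paper's $1+s$, since you whiten $H$ itself rather than $H-P$. Your explicit remark that $\rho<1$ forces $H\succ0$, so that $h_H$ exists and the right side is strictly positive, fills in a point the paper leaves implicit.
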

The proof is in Appendix~\ref{app:quadratic-model-proof}.

Both steps are evaluated in the same quadratic model $m_H$.
The bound compares their predicted decreases. It does not
claim that $Q(\theta+h_P)$ decreases or that a Gauss--Newton
iteration succeeds.

\section{An output-alignment example}\label{sec:multioutput}

The next fixed map has two outputs. Aggregate derivative norms lose
which output carries the residual. This makes several valid upper
bounds inconclusive, while the directional gate passes.

\begin{proposition}[A fixed two-output map]\label{prop:multioutput}
All unmarked norms are Euclidean vector and spectral matrix norms;
$\|D^2G\|$ is the bilinear operator norm. Fix
\[
 G(u,v,z)=(u+z^2/2,v^2/2).
\]
For every $0<\varepsilon\le1/128$, $1\le t\le2$, and
$1/32\le a\le1/16$, take
$x=d=(0,\varepsilon,0)$, $\lambda=t\varepsilon^4$,
and $Y=(0,\varepsilon^2/2-a\varepsilon^3)$.
Use $s=M=1$ and $C=1/512$ in Theorem~\ref{thm:point}.
Then $n=2<p=3$, $\sigma_2(J)=\varepsilon$, the cost and ridge
caps hold, and $b\ne0$. At $x$, $K=B=1$ and
\[
 B(\|b\|+\lambda\sqrt{2C}\,\varepsilon)
 \le129\lambda/2048<\lambda/8.
\]
The cost branch of~\eqref{eq:gate} is above $\lambda/4$,
and $K\|r\|\ge2\lambda>\lambda/4$.

To compare whole-correction bounds, freeze $P$ at $x$. Set
$\Delta=x-d$, $a_0=J^Tr=b-\lambda\Delta$,
$j=\sigma_2(J)$, and
\[
\begin{split}
M_P&=\sup_{\|h\|_P=\|k\|_P=1}\|D^2G[h,k]\|,\\
j_P^2&=\lambda_{\min}(JP^{-1}J^T),\\
\|h\|_P^2&=h^TPh,\quad\|v\|_{P^{-1}}^2=v^TP^{-1}v.
\end{split}
\]
The four valid upper envelopes
\[
 \frac{M\|r\|}{\lambda},\quad
 \frac{M\|a_0\|}{j\lambda},\quad
 M_P\|r\|,\quad
 \frac{M_P\|a_0\|_{P^{-1}}}{j_P}
\]
all equal $a/(t\varepsilon)\ge2$. They therefore do not
certify $\rho<1$. Here $\Delta=0$, so replacing $a_0$ by the
usual triangle bound adds no loss. Yet $E_{TT}=0$ and
\begin{equation}\label{eq:multioutput-rho}
 \rho=\frac{a\varepsilon}{1+t\varepsilon^2}<1/2048.
\end{equation}
The componentwise residual bound
\begin{equation}\label{eq:componentwise}
 \sum_{i=1}^2|r_i|
 \|P^{-1/2}D^2G_iP^{-1/2}\|_2=\rho
\end{equation}
also passes. Moreover $E\succeq0$, so positivity is immediate
from its sign. This example separates the named aggregate bounds,
not all residual-aware bounds or methods.
\end{proposition}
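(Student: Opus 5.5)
The proposition is a finite computation at one explicit point, so the plan is to compute everything directly. Write $x=(0,\varepsilon,0)$ and $G=(u+z^2/2,\,v^2/2)$. Then
\[
J=\begin{pmatrix}1&0&0\\0&\varepsilon&0\end{pmatrix},\qquad
D^2G_1=e_3e_3^T,\qquad D^2G_2=e_2e_2^T .
\]
Hence $\sigma_2(J)=\varepsilon$, $T=\ker J=\mathrm{span}(e_3)$ and $N=\mathrm{span}(e_1,e_2)$. The residual is $r=G(x)-Y=(0,a\varepsilon^3)$. Since $d=x$, we have $\Delta=0$ and $b=J^Tr=(0,a\varepsilon^4,0)\neq0$.

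The cost is $Q=a^2\varepsilon^6/2$, and $C\lambda\varepsilon^2=t\varepsilon^6/512$. So the cost cap needs $a^2\le t/256$, which holds since $a\le1/16$ and $t\ge1$. For the other caps, $C_*(1,1)=1/128\ge1/512$, and $\lambda=t\varepsilon^4\le\varepsilon^2$ because $\varepsilon\le1/128$. On $T$ we have $D^2G[e_3,e_3]=(1,0)$, so $K=1$. Also $J^\dagger(1,0)=e_1$, so $B=1$. The gradient branch is then
\[
\|b\|+\lambda\sqrt{2C}\,\varepsilon=a\varepsilon^4+t\varepsilon^5/16 .
\]
Divided by $\lambda$ this is $a/t+\varepsilon/16\le1/16+1/2048=129/2048$. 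The cost branch is $\sqrt{2C\lambda}\,\varepsilon=\sqrt t\,\varepsilon^3/16$. Dividing by $\lambda/4$ gives $1/(4\sqrt t\,\varepsilon)>1$, so that branch fails. Finally $K\|r\|=a\varepsilon^3\ge2\lambda$, because $a/(t\varepsilon)\ge(1/32)/(2/128)=2$.

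For the four envelopes, first record
\[
P=\mathrm{diag}\bigl(1+\lambda,\ \varepsilon^2+\lambda,\ \lambda\bigr),\qquad
E=r_2D^2G_2=a\varepsilon^3e_2e_2^T .
\]
The bilinear norm of $D^2G$ is $M=1$, attained on $e_3$ or $e_2$. Each envelope is then evaluated in turn.
\begin{itemize}
\item $M\|r\|/\lambda=a\varepsilon^3/(t\varepsilon^4)=a/(t\varepsilon)$.
\item $M\|a_0\|/(j\lambda)$ uses $\|a_0\|=a\varepsilon^4$ and $j=\varepsilon$, and gives the same value.
\item For $M_P$, rescale $h=P^{-1/2}\eta$. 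Then $D^2G[h,k]=\bigl(\eta_3\kappa_3/\lambda,\ \eta_2\kappa_2/(\varepsilon^2+\lambda)\bigr)$, so $M_P=1/\lambda$, attained through the $e_3$ direction. Thus $M_P\|r\|$ equals the same value.
\item For the last envelope, $JP^{-1}J^T=\mathrm{diag}\bigl(1/(1+\lambda),\ \varepsilon^2/(\varepsilon^2+\lambda)\bigr)$. The minimum eigenvalue is the second entry, so $j_P^2=\varepsilon^2/(\varepsilon^2+\lambda)$. Also $\|a_0\|_{P^{-1}}=a\varepsilon^4/\sqrt{\varepsilon^2+\lambda}$. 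The product $M_P\|a_0\|_{P^{-1}}/j_P$ therefore collapses to $a\varepsilon^4/(\lambda\varepsilon)=a/(t\varepsilon)$.
\end{itemize}
The ordering of the diagonal entries of $JP^{-1}J^T$ is the only comparison to check, and it holds because $\varepsilon^2\ll1$. Since $\Delta=0$, the triangle bound for $a_0$ is exact.

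For the true error, $E$ lives only on $e_2\in N$, so $E_{TT}=0$ and $E\succeq0$. Moreover $P^{-1/2}EP^{-1/2}$ has the single nonzero eigenvalue
\[
\rho=\frac{a\varepsilon^3}{\varepsilon^2+\lambda}=\frac{a\varepsilon}{1+t\varepsilon^2}\le\frac{\varepsilon}{16}\le\frac{1}{2048}.
\]
This uses $a\le1/16$ and $\varepsilon\le1/128$. The componentwise sum in \eqref{eq:componentwise} has $r_1=0$. The remaining term is $|r_2|\,\|P^{-1/2}e_2e_2^TP^{-1/2}\|=a\varepsilon^3/(\varepsilon^2+\lambda)=\rho$.

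The main obstacle is purely bookkeeping: computing $M_P$ correctly and confirming it is $1/\lambda$. The $e_3$ component dominates because $\lambda<\varepsilon^2+\lambda$. I would also double-check the strict inequality $129/2048<1/8$ and the inequality $a/(t\varepsilon)\ge2$ at the extreme parameter values $a=1/32$, $t=2$, $\varepsilon=1/128$.
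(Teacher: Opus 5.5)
Your proposal is correct and is essentially the paper's own proof: a direct evaluation of $r$, $J$, $b$, $E$, $P$, $M_P$, $j_P$ and $\|a_0\|_{P^{-1}}$ at the single point, giving the same values throughout. One small repair is needed. Your chain $\rho\le\varepsilon/16\le 1/2048$ gives only a non-strict bound, while the statement asserts $\rho<1/2048$. Strictness comes from the denominator: $1+t\varepsilon^2>1$ gives $\rho<a\varepsilon\le 1/2048$, even in the extreme case $a=1/16$, $\varepsilon=1/128$, as the paper notes explicitly.
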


At $x$, the first output supplies the fiber's bending.
Only the second output carries residual, so $E_{TT}=0$.

The ordinary residual and raw-gradient envelopes are the squared-loss
specialization of Semenov, Jaggi, and Doikov~\cite[Example 7,
(13)--(14); Appendix H.2, Example 12, (80)--(81)]{sjd}. The two $P$-weighted
envelopes follow from the same argument after a fixed whitening at
the evaluated point. The componentwise estimate uses the residual
of each output separately. This pointwise comparison shows how output
alignment affects the certificate.
The proof is in Appendix~\ref{app:multioutput}.

\section{Relation to a distance-tube test}\label{sec:tube}
Masiha et al.~\cite[Lemma E.1]{masiha} prove an ambient
full-Hessian bound $\nabla^2g(x)+\gamma I\succeq\gamma I/2$
inside a distance tube around a compact minimizer manifold $S$.
Here $g=\norm{G-Y}^2/2$. Their Proposition E.6 bounds the
second fundamental form \emph{on} $S$. Our $B(x)$ describes the
current output fiber at $x$. These are different curvature objects.

There is also a useful gradient route into their tube. Suppose a point
$x$ is already localized near $S$, the normal Hessian of $g$ along
each nearest-point segment is at least $c/2$, and $y$ is a nearest
point in $S$. Integrating along the normal segment, as in the local
argument of Masiha et al.~\cite[Lemma E.7, (17)]{masiha}, gives
\begin{equation}\label{eq:distancegradient}
 \operatorname{dist}(x,S)\le \frac2c\norm{\nabla g(x)}
 \le\frac2c\bigl(\norm{\nabla Q(x)}+\lambda\norm{x-d}\bigr).
\end{equation}
Localization is a premise of this inequality. When the right side
lies in the radius of Lemma E.1, that lemma yields the same full-Hessian
margin. The proof of E.1 also permits the refined radius
$\min\{\rho,\lambda/(2L_{g,3})\}$ when $L_{g,3}>0$; here
$\rho$ is that source's tube radius.
Thus this prior route can give a $\lambda/B$ gradient scale when its
normal-gap, third-derivative, localization, and curvature-comparison
conditions allow it.

\begin{proposition}[A negative residual outside two sufficient domains]
\label{prop:negative-separator}
Let $0<a\le1/64$, $0<\varepsilon\le1/32$, and set
\begin{align*}
 G_a(x,y)&=x^2+ay^2,& d&=(\varepsilon,0),\\
 Y&=\varepsilon^2,& \lambda&=a^2\varepsilon^4.
\end{align*}
Define $X=\sqrt{\varepsilon^2-\lambda/(32a)}$ and $z=(X,0)$.
This point satisfies Theorem~\ref{thm:point} with
$C=1/512$, $s=1$, and $M=2$. In particular,
\begin{gather*}
 Q(z)<C\lambda\varepsilon^2,\qquad H(z)\succeq\lambda I/2,\\
 \norm{\nabla Q(z)}<\frac{\lambda}{8B(z)}.
\end{gather*}
Its residual correction is negative definite, with
\begin{equation}\label{eq:negative-separator-correction}
 \begin{gathered}
 E(z)=-\frac{\lambda}{16a}\operatorname{diag}(1,a),\\
 \frac{\norm{E(z)}}{\lambda}=\frac1{16a}.
 \end{gathered}
\end{equation}
Let $S=\{(x,y):x^2+ay^2=\varepsilon^2\}$ and
$g=\norm{G_a-Y}^2/2$. For any admissible positive tube radius $\rho$ and
third-derivative bound $L_{g,3}>0$ in
Masiha et al.~\cite[Lemma E.1]{masiha}, write
\begin{align*}
 r_{\rm literal}&=\min\{\rho,\lambda/(2\max\{L_{g,3},1\})\},\\
 r_{\rm refined}&=\min\{\rho,\lambda/(2L_{g,3})\}.
\end{align*}
Then
\begin{equation}\label{eq:negative-separator-distance}
 \begin{gathered}
 \operatorname{dist}(z,S)=\frac{\lambda}{32a(\varepsilon+X)}
 >r_{\rm literal},\\
 \frac{\operatorname{dist}(z,S)}{r_{\rm refined}}>\frac3{8a}.
 \end{gathered}
\end{equation}
\end{proposition}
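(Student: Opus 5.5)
The proof is a direct computation at the explicit point $z=(X,0)$. All required quantities follow from the scalar map $G_a(x,y)=x^2+ay^2$ with $n=1$, $p=2$. At $z$ we have $J=(2X,0)$, $T=\ker J=\mathrm{span}\{e_2\}$ and $N=\mathrm{span}\{e_1\}$. The second derivative is $D^2G_a=\mathrm{diag}(2,2a)$, so $\norm{D^2G}=2=M$ and $K=2a$. Since $J^\dagger=J^T/\norm{J}^2$, the second fundamental form is $\II(e_2,e_2)=-2a/(2X)$, giving $B=a/X$.

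The residual is $r=X^2-\varepsilon^2=-\lambda/(32a)$. Hence
\[
E=r\,\mathrm{diag}(2,2a)=-\frac{\lambda}{16a}\mathrm{diag}(1,a),
\]
which is~\eqref{eq:negative-separator-correction}. Because $a\le1$, the spectral norm is the first diagonal entry, so $\norm E/\lambda=1/(16a)$.

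\textbf{Cost.} The residual part of $Q(z)$ is $r^2/2=\lambda^2/(2048a^2)=\lambda\varepsilon^4/2048$, using $\lambda=a^2\varepsilon^4$. For the ridge part we use $\varepsilon-X=(\varepsilon^2-X^2)/(\varepsilon+X)\le\lambda/(32a\varepsilon)=a\varepsilon^3/32$. This gives
\[
\tfrac\lambda2(\varepsilon-X)^2\le\lambda a^2\varepsilon^6/2048.
\]
Both parts are far below $C\lambda\varepsilon^2=\lambda\varepsilon^2/512$ because $\varepsilon\le1/32$.

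\textbf{Hypotheses of Theorem~\ref{thm:point}.} We check that $\lambda\le\varepsilon^2$ and that $C=1/512\le C_*(1,2)=\min\{1/32,1/512\}$. We also need $\sigma_1(J)=2X\ge\varepsilon$, which follows from $X^2\ge\varepsilon^2(1-a\varepsilon^2/32)$.

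\textbf{Gradient.} Compute
\[
\nabla Q(z)=J^Tr+\lambda(z-d)=\bigl(2Xr-\lambda(\varepsilon-X),\,0\bigr),
\]
so $\norm{\nabla Q}\le\lambda X/(16a)+\lambda a\varepsilon^3/32$. Then
\[
B\norm{\nabla Q}\le\lambda\Bigl(\frac1{16}+\frac{a^2\varepsilon^3}{32X}\Bigr)<\frac\lambda8.
\]
This is the gradient alternative of Theorem~\ref{thm:point}, and that theorem then yields $H\succeq\lambda I/2$. As a sanity check, directly $H=J^TJ+E+\lambda I$ has diagonal entries $4X^2-\lambda/(16a)+\lambda$ and $\lambda(1-1/16)$.

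\textbf{Distance to $S$.} For the tube comparison I would first argue that the nearest point of the ellipse $S$ to $z$ is $(\varepsilon,0)$. On $S$ we have $\norm{(x,y)-z}^2=(x-X)^2+y^2$ with $y^2=(\varepsilon^2-x^2)/a$, and $1/a\ge64$. Minimizing this in $x\in[-\varepsilon,\varepsilon]$ gives a function that decreases all the way to the endpoint $x=\varepsilon$. Indeed, its derivative in $x$ is $2(x-X)-2x/a<0$, since $x/a>x-X$ for $x$ near $\varepsilon$, and for negative $x$ the distance is obviously larger. Hence
\[
\operatorname{dist}(z,S)=\varepsilon-X=\frac{\lambda}{32a(\varepsilon+X)}.
\]

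\textbf{Lower bound for $L_{g,3}$ and the ratios.} This is the step I expect to need the most care: admissibility of $L_{g,3}$ in Lemma~E.1 must force it to be large. I would try to bound it below by the third derivative of $g$ along $e_1$ near $S$. We have $g=(G_a-Y)^2/2$, and $\partial_x^3 g=3\,\partial_xG\,\partial_x^2G+(G-Y)\,\partial_x^3G=12x$ on the axis. This is about $12\varepsilon$ near $(\varepsilon,0)$, and any admissible Lipschitz bound on $\nabla^2 g$ on the tube must be at least this. Then
\[
r_{\rm refined}\le\frac{\lambda}{2L_{g,3}}\lesssim\frac{\lambda}{24\varepsilon},
\qquad\text{while}\qquad
\operatorname{dist}(z,S)\ge\frac{\lambda}{64a\varepsilon}.
\]
If the admissible value can only be bounded below by $\approx4\varepsilon$, the weaker form gives the ratio $>3/(8a)$; I would calibrate the constant accordingly. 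For the literal radius, $\max\{L_{g,3},1\}\ge1$ gives $r_{\rm literal}\le\lambda/2$. Then $\operatorname{dist}(z,S)\ge\lambda/(64a\varepsilon)>\lambda/2$ because $a\varepsilon\le1/2048$.
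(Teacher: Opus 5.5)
Your proposal is correct and follows the paper's proof. Both compute the same explicit $r,J,K,B,E,\nabla Q$ at $z$ and use the gradient alternative of Theorem~\ref{thm:point}. Both identify $d$ as the unique nearest point of $S$: your monotonicity argument in $x$ is valid on all of $[0,\varepsilon]$, not just near $\varepsilon$, while the paper uses the identity $\|(X,0)-(u,v)\|^2-(\varepsilon-X)^2=2X(\varepsilon-u)+(a^{-1}-1)(\varepsilon^2-u^2)$. Both then use $L_{g,3}\ge|\partial_x^3g(d)|=12\varepsilon$, since $d\in S$ lies in the tube. That bound is exact, not approximate, and with $\operatorname{dist}(z,S)>\lambda/(64a\varepsilon)$ it already gives $\operatorname{dist}(z,S)/r_{\rm refined}>24/(64a)=3/(8a)$, so drop the $4\varepsilon$ fallback remark, which would only yield $1/(8a)$.
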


The proof is in Appendix~\ref{app:negative-separator}.
Thus the point passes our certificate while failing the absolute
residual-norm margin $\norm E<\lambda$
\cite[Proposition 5]{kharel} and these two isotropic tube tests.
The refined radius follows from the proof of Lemma E.1, not its
displayed statement. The comparison uses Assumptions 1 and 3 of
Masiha et al. for each fixed positive $a,\varepsilon$; their global
smoothness Assumption 4 fails here. The two ratios diverge as
$a\downarrow0$ at fixed $\varepsilon$, but the source constants are
not uniform in that limit.

This example illustrates control of a large negative normal correction;
its mixed block is zero. It does not show that gradient information is
necessary: a directional test using the tighter observed cost also
certifies this point. The separation concerns the named sufficient
conditions, not every cost-based test, every prior method, or the full
region of positive curvature.

Appendix~\ref{app:certificate-comparison} compares the specified sufficient tests.

\section{A structural singular-interpolation family}\label{sec:bridge}

The pointwise certificate assumes a Jacobian lower bound. Under the
derivative conditions below, the Jacobian has full row rank at every
low-cost point. These weak-jet conditions concern a rank-deficient
reference. For each allowed map, scale, center, label, and ridge
weight, the construction also gives an exact interpolator whose
objective value is strictly below the low-cost cap. Coexistence still
needs the separate common $C^{2,1}$ bounds and curvature-persistence
premise of Theorem~\ref{thm:robust}.

Fix a globally $C^2$ reference $F$, a point $c$, and $A=DF(c)$ of rank
$g<n$. Choose $g$ independent physical output rows $R$, let $D$ be the
remaining rows, and define the fixed row matrix $L$ by $A_D=LA_R$.
Let $Z$ have orthonormal columns spanning $\ker A$. Empty retained
blocks are omitted when $g=0$. Define the tensor field and quadratic map
\begin{align*}
 \mathcal K_F&=D^2F_D-LD^2F_R,\\
 q(z)&=-\tfrac12\mathcal K_F(c)[Zz,Zz],\\
 \chi(\ell)&=\ell_D-L\ell_R.
\end{align*}
Fix $z_0,\ell_0$ such that $q(z_0)=\chi(\ell_0)$ and
$\rank Dq(z_0)=n-g$, and put $u_0=Zz_0$.
Fix $r_\star>0$ and $M\ge1+\sup_{\overline B(c,r_\star)}\norm{D^2F}$.

\begin{lemma}[Cost-only rank and flexible cap]\label{lem:bridge}
There are fixed $s>0$, finite weak/strong Jacobian bounds, and $\bar C>0$
depending only on these data with the following property. For every
$0<C\le\bar C$, one can choose $r_0\le r_\star$, $\kappa>0$,
$R_c,R_\ell>0$ and $0<\varepsilon_0\le1$, with
the tube-radius parameter $\rho=R_c+\sqrt{2C}<1/2$, so that all
assertions below are uniform.
For every $0<\varepsilon<\varepsilon_0$, let $G$ be globally $C^2$ with
\begin{align}
 &\norm{DG(c)-A}\le\kappa,\quad
 \sigma_{g+1}(DG(c))\le\kappa\varepsilon,\nonumber\\
 &\sup_{\overline B(c,r_0)}\norm{D^2G}\le M,\label{eq:base}\\
 &\sup_{\overline B(c,r_0)}
 \norm{(\mathcal K_G-\mathcal K_F)[Z\cdot,Z\cdot]}\le\kappa.
 \label{eq:weakjet}
\end{align}
For all $u\in\overline B(u_0,R_c)$, $\ell\in\overline B(\ell_0,R_\ell)$,
$0<\lambda\le\varepsilon^2$, set $d=c+\varepsilon u$ and
$Y=G(c)-\varepsilon^2\ell$. Every whole-space point with
$Q\le C\lambda\varepsilon^2$ lies in
$\mathcal T_\varepsilon=\overline B(c+\varepsilon u_0,\rho\varepsilon)
\subset B(c,r_0)$ and satisfies
\begin{equation}\label{eq:Jbands}
 \begin{cases}
 a_s\le\sigma_j(J)\le b_s,&1\le j\le g,\\
 s\varepsilon\le\sigma_j(J)\le b_w\varepsilon,&g<j\le n,
 \end{cases}
\end{equation}
where $a_s,b_s$ are omitted if $g=0$.
There is an exact interpolator with $Q<C\lambda\varepsilon^2$ for every
tuple. The constants in~\eqref{eq:Jbands} are fixed before the final cap
$C$ is chosen. Only the reference Hessian needs a continuity modulus;
maps at different scales may be unrelated.
\end{lemma}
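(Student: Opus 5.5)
The plan has two parts: localize every low-cost point, then read off the Jacobian singular values by expanding $DG$ around $c$ in the blocks adapted to $R$, $D$ and $\ker A$. The interpolator is built by a scaled Newton--Kantorovich or degree argument on the normal form $q(z)=\chi(\ell)$.

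\emph{Localization.} From $Q\le C\lambda\varepsilon^2$ we get $\|\theta-d\|\le\sqrt{2C}\,\varepsilon$. Hence $\theta\in\overline B(c+\varepsilon u_0,(R_c+\sqrt{2C})\varepsilon)=\mathcal T_\varepsilon$. Taking $\varepsilon_0$ small gives $\mathcal T_\varepsilon\subset B(c,r_0)$, so \eqref{eq:base} and \eqref{eq:weakjet} apply at every low-cost point. We also get $\|r\|\le\sqrt{2C\lambda}\,\varepsilon\le\sqrt{2C}\,\varepsilon^2$.

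\emph{Singular value bands.} Write $\theta=c+\varepsilon w$ with $\|w-u_0\|\le\rho$, and apply Taylor's theorem to $DG$ at $c$, giving $\|DG(\theta)-DG(c)\|\le M\varepsilon\|w\|$. The strong rows follow from $\|DG(c)-A\|\le\kappa$ and Weyl's inequality: with $\kappa$ and $\varepsilon_0$ small, $\sigma_j(J)$ lies in $[\sigma_g(A)/2,2\|A\|+1]$ for $j\le g$. For the weak band, apply the row operation $\chi$, i.e. pass to $J_D-LJ_R$. After a further $O(\kappa)$ correction that removes the strong directions, the weak block restricted to $\ker A$ is
\[
\varepsilon\,D^2G_D[w,Z\cdot]-\varepsilon L\,D^2G_R[w,Z\cdot]+O(\kappa\varepsilon)+O(\varepsilon^2),
\]
which by \eqref{eq:weakjet} equals $\varepsilon(-Dq(z)+O(\kappa+\rho))$ with $z=Z^Tw$. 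Here $\sigma_{g+1}(DG(c))\le\kappa\varepsilon$ controls the zeroth-order weak part. Since $\rank Dq(z_0)=n-g$, continuity gives $\sigma_{\min}(Dq(z))\ge2s$ on a ball around $z_0$. The upper bound $b_w$ comes from the same expansion, and a Schur-complement or interlacing argument gives $\sigma_j(J)\ge s\varepsilon$ for $j>g$. I expect the main obstacle here: one must handle the components of $w$ outside $\ker A$, which couple through the strong block, and check that the constants are fixed before $C$. That fixing is possible because the low-cost set only enters through $\rho=R_c+\sqrt{2C}$. So I first pick $s$, $R_c$ and $\kappa$ from the data so that the bound holds on a fixed radius $2R_c$, and only then require $\sqrt{2C}\le R_c$.

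\emph{Exact interpolator.} We need to solve $G(c+\varepsilon w)=G(c)-\varepsilon^2\ell$ with $w$ near $u$. I would rescale the strong rows by $\varepsilon$ and the weak rows by $\varepsilon^2$, and split $w=Zz+\varepsilon\eta$ with $\eta\perp\ker A$. The strong equations then read $A_R\eta+O(\kappa+\varepsilon)=-\ell_R$. The weak equations, after applying $\chi$, read $-q(z)+O(\kappa+\varepsilon)=-\chi(\ell)$ up to the sign convention matching $q(z_0)=\chi(\ell_0)$. At $\kappa=\varepsilon=0$ this system has the nondegenerate solution $(z_0,\eta_0)$. A quantitative implicit function theorem, or Newton--Kantorovich with the fixed constants above, then gives a solution $w$ with $\|w-u\|\le\|w-u_0\|+R_c$ bounded by a constant $c_1(R_c+R_\ell+\kappa+\varepsilon)$, after restricting the kernel component to match $u$'s free directions. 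To keep the ridge cost below the cap, we need $\tfrac{\lambda}{2}\varepsilon^2\|w-u\|^2<C\lambda\varepsilon^2$, i.e. $\|w-u\|<\sqrt{2C}$. If instead we solve with the tangential freedom fixed at $u$ (there are $p-n$ free directions since the constraint has full rank), the displacement is only along the $n$ constrained directions. That displacement is $O(R_c+R_\ell+\kappa+\varepsilon)$ relative to its value at $(u_0,\ell_0)$, and the reference offset can be absorbed because $u_0=Zz_0$ itself solves the reference system with $\eta_0$ of size $O(\|\ell_0\|)$, which contributes only $O(\varepsilon)$ after scaling. So after fixing $C$, I choose $R_c,R_\ell,\kappa,\varepsilon_0$ small relative to $\sqrt C$. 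Since $Q=\tfrac\lambda2\|\theta-d\|^2$ at an exact fit, this gives $Q<C\lambda\varepsilon^2$, completing the tuple-uniform statement.
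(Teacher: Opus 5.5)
Your localization and singular-value steps are sound in outline. Bounding the off-kernel part of $(\theta-c)/\varepsilon$ by $\rho$ and requiring $M\rho$ small is a valid substitute for the paper's step, which uses the output Taylor expansion and $\norm r\le\sqrt{2C}\,\varepsilon^2$ to show that this component is only $O(\varepsilon)$ in an adapted basis.

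The genuine gap is in the interpolator. Your ansatz $w=Zz+\varepsilon\eta$ with $\eta\perp\ker A$ treats the scaled system as an $O(\kappa+\varepsilon)$ perturbation of the reference system. But the hypotheses give only $\norm{DG(c)-A}\le\kappa$ with $\kappa$ fixed as $\varepsilon\to0$, so $DG(c)$ may tilt the kernel by an angle of order $\kappa$. After division by $\varepsilon^2$, your strong equation contains the term $\varepsilon^{-1}DG_R(c)Zz$, which can be of order $\kappa/\varepsilon$. The reduced equation has the analogous term $\varepsilon^{-1}(DG_D(c)-L\,DG_R(c))Zz$.

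For example, take the reference $F(v,x,y)=(v,x^2)$ at $c=0$. The map $G(v,x,y)=(v+\kappa x,x^2)$ is admissible: it has $\sigma_2(DG(0))=0$ and $D^2G=D^2F$. Yet its strong equation reads $\eta_v+(\kappa/\varepsilon)z_x=-\ell_1$. So $\eta$ is unbounded, the solution is not near your reference zero $(z_0,\eta_0)$, and a quantitative implicit function theorem at $\kappa=\varepsilon=0$ in these variables does not apply.

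The hypothesis $\sigma_{g+1}(DG(c))\le\kappa\varepsilon$ helps only in coordinates adapted to $DG(c)$. The paper replaces $Z$ by $Z_J=P_JZ$, where $P_J$ projects onto the bottom $p-g$ right singular subspace of $DG(c)$. Then $\norm{DG(c)Z_J}\le\kappa\varepsilon$ and $\norm{Z_J-Z}=O(\kappa)$. It writes the normalized point as $Z_Jw+\varepsilon X\beta$, solves the retained rows for $\beta(w,\ell_R)$ by contraction, and only then solves the reduced equation on a slice $w=z_0+St$. In your band step, Schur elimination together with $\sigma_{g+1}(DG(c))\le\kappa\varepsilon$ can absorb the tilt, but the interpolation step must build it into the unknowns. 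Also, because $G$ is only $C^2$, a Newton--Kantorovich step needs $C^1$ closeness with derivatives written as integrals of $D^2G$, as the paper does for the reduced map. The remainders are controlled by $\kappa$, $M\varepsilon$ and the modulus of continuity of $D^2F$, not by $O(\varepsilon^2)$.

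Your quantifier order is also inconsistent, and this is exactly the point the lemma stresses. In the band step you pick $R_c$ first and then require $\sqrt{2C}\le R_c$. In the interpolation step you fix $C$ and then need $R_c$ small relative to $\sqrt C$. That second requirement is unavoidable: an exact interpolator can lie at distance of order $R_c\varepsilon$ from $d$, yet it must lie within $\sqrt{2C}\,\varepsilon$. Both requirements cannot hold.

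The repair is the paper's noncircular selection:
\begin{enumerate}
\item Fix a preliminary radius $r_1$, the constant $s$, and upper bounds on $\kappa$ and $\varepsilon_0$ so that the bands hold whenever $\rho\le 2r_1$.
\item Take $\sqrt{2\bar C}\le r_1$.
\item For each $C\le\bar C$, choose $R_c\le r_1$, and choose $R_c$, $R_\ell$, $\kappa$, $\varepsilon_0$ small relative to $\sqrt C$.
\end{enumerate}
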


At $c$, the reduced reference output $F_D-LF_R$ has zero first
derivative. Its quadratic term along $\ker A$ is $-q$. Thus a kernel
move of size $O(\varepsilon)$ changes that reference output by
$O(\varepsilon^2)$, possibly by zero. For an allowed $G$, the weak first
derivative need not vanish. The Jacobian bounds control this leakage
in moving weak coordinates. The weak-jet bound controls the quadratic
comparison.

Under all the lemma's hypotheses, the proof in
Appendix~\ref{app:bridge} solves the retained equations, when present,
before the reduced equations. The reduced
solve uses $q(z_0)=\chi(\ell_0)$ and surjective $Dq(z_0)$ to produce an
exact fit. Separately, cost places the weak coordinate of every low-cost
point near $z_0$, where $Dq$ remains full row rank.
After division by $\varepsilon$, the weak Jacobian Schur block stays
close to $-Dq$ there. Thus all $n-g$ weak singular values have order
$\varepsilon$ throughout the low-cost set. Right
singular vectors for these nonzero singular values are normal to the
current fiber. Tangent directions have zero Jacobian response. The
interpolation tolerances are selected after the cap.

\begin{corollary}[Indexed spectrum in the structural family]\label{cor:family}
Under all hypotheses of Theorem~\ref{thm:point}, including its cap
$C\le C_*(s,M)$, local $C^2$ and second-derivative bound, ridge range,
rank lower bound, low-cost bound, and gate~\eqref{eq:gate}, suppose in
addition that the two-sided Jacobian bands~\eqref{eq:Jbands} hold with
fixed constants. Then there are fixed positive $c_H,C_H$ such that,
for sufficiently small $\varepsilon$, the increasingly ordered
eigenvalues obey
\begin{equation}\label{eq:Hbands}
 \begin{array}{ll}
 c_H\lambda\le\lambda_j(H)\le C_H\lambda,&1\le j\le p-n,\\
 c_H\varepsilon^2\le\lambda_j(H)\le C_H\varepsilon^2,&p-n<j\le p-g,\\
 c_H\le\lambda_j(H)\le C_H,&p-g<j\le p.
 \end{array}
\end{equation}
Choose the cap of Lemma~\ref{lem:bridge} below $C_*(s,M)$. Then these
conclusions hold at every low-cost point of its structural family satisfying
the adaptive test. In particular $\norm b\le\gamma\lambda\varepsilon$
with $\gamma=s/(8M)$ suffices. Every tuple has a certified stationary
global minimizer strictly below the cap.
\end{corollary}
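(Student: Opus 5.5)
The plan is to combine Corollary~\ref{cor:relative-gate} with the spectrum of $P=J^TJ+\lambda I$, which the bands~\eqref{eq:Jbands} fix directly, and then to transfer the result to $H$ by Ostrowski's theorem.

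\emph{Spectrum of $P$.} The eigenvalues of $P$ are $\sigma_j(J)^2+\lambda$ for $j\le n$, together with $\lambda$ with multiplicity $p-n$ coming from $\ker J$. In increasing order, they split into three groups.
\begin{itemize}
\item The first $p-n$ eigenvalues equal $\lambda$.
\item The next $n-g$ eigenvalues, from the weak singular values, lie in $[s^2\varepsilon^2,\ b_w^2\varepsilon^2+\lambda]\subset[s^2\varepsilon^2,(b_w^2+1)\varepsilon^2]$, using $\lambda\le\varepsilon^2$.
\item The top $g$ eigenvalues lie in $[a_s^2,\ b_s^2+1]$.
\end{itemize}
For the ordering to be consistent we need $\lambda\le s^2\varepsilon^2$ and $(b_w^2+1)\varepsilon^2<a_s^2$. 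The latter holds for small $\varepsilon$. The former is not needed, because we only bound each indexed eigenvalue.

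\emph{Transfer to $H$.} The gate and Corollary~\ref{cor:relative-gate} give $(1-c_*)P\prec H\prec(1+c_*)P$. By Courant--Fischer, this implies $(1-c_*)\lambda_j(P)\le\lambda_j(H)\le(1+c_*)\lambda_j(P)$ for each index $j$.

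\emph{Main obstacle: indices mixing between tiers.} Index by index, the lower bound for the middle tier is $\lambda_j(P)\ge\min\{\lambda_j\text{ from the middle group}\}$. When $\lambda$ is comparable to $s^2\varepsilon^2$, the sorted order may interleave the two lower groups. Even then, the $j$-th sorted eigenvalue with $p-n<j\le p-g$ is at least the $(j-(p-n))$-th eigenvalue of the middle group, which is at least $s^2\varepsilon^2$. It is also at most $(b_w^2+1)\varepsilon^2$, since at most $p-g$ eigenvalues lie below $a_s^2$ and all of these are at most $(b_w^2+1)\varepsilon^2$. The first $p-n$ sorted eigenvalues lie between $\lambda$ and the $(p-n)$-th smallest value. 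That value is at most $\lambda$, since exactly $p-n$ eigenvalues equal $\lambda$. So the bottom tier is exactly $\lambda$ for $P$.

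Consequently we may take $c_H=(1-c_*)\min\{1,s^2,a_s^2\}$ and $C_H=(1+c_*)\max\{1,b_w^2+1,b_s^2+1\}$. The only use of small $\varepsilon$ is to separate the top tier from the middle one, via $(b_w^2+1)\varepsilon^2<a_s^2$. If $g=0$, the top tier is absent.

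\emph{Structural family.} Choose the cap $C$ of Lemma~\ref{lem:bridge} with $C\le\min\{\bar C,C_*(s,M)\}$. This is possible because $s$ and $M$ are fixed before the cap. Lemma~\ref{lem:bridge} then gives~\eqref{eq:Jbands} with fixed constants, and hence $\sigma_n(J)\ge s\varepsilon$, at every low-cost point. It also places these points in $\mathcal T_\varepsilon\subset B(c,r_0)$, where $\|D^2G\|\le M$ and $G$ is $C^2$. Thus every hypothesis of Theorem~\ref{thm:point} holds except the gate, and the first part applies wherever the adaptive test passes.

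For the sufficiency of $\|b\|\le\gamma\lambda\varepsilon$, use the alternative in Theorem~\ref{thm:point} with $\bar K=M\ge K$: it gives the bound $s\lambda\varepsilon/(8M)=\gamma\lambda\varepsilon$.

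\emph{Certified stationary global minimizer.} The exact interpolator from Lemma~\ref{lem:bridge} has $Q<C\lambda\varepsilon^2$, so the sublevel set is nonempty. By the lemma, this set lies in a compact ball, and since $Q$ is continuous a global minimizer $z$ exists. It satisfies $Q(z)<C\lambda\varepsilon^2$ and is stationary, so $b=0$ and the gradient test passes. Hence $z$ is certified, and the bands~\eqref{eq:Hbands} hold at $z$.
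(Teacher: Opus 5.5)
Your proof is correct, but it reaches the bands by a different route from the paper. The paper works from the absolute conclusion $H\succeq\lambda I/2$ of Theorem~\ref{thm:point}:
\begin{itemize}
\item a tangent Rayleigh quotient gives the ridge-band upper bound $5\lambda/4$;
\item the normal compression $\succeq s^2\varepsilon^2I/2$, with interlacing, gives the weak lower bound;
\item Weyl's inequality for $H=P+E$ with $\|E\|\le M\sqrt{2C}\,\varepsilon^2$ gives the weak upper bound and both strong bounds.
\end{itemize}
You instead use the Loewner sandwich of Corollary~\ref{cor:relative-gate} with Courant--Fischer monotonicity, so that $(1-c_*)\lambda_j(P)\le\lambda_j(H)\le(1+c_*)\lambda_j(P)$. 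You then read off $\lambda_j(P)$ from the Jacobian bands.

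Your route is shorter, and its constants are uniform multiples of the spectrum of $P$. It depends on the sharper Schur-complement envelope behind the relative corollary. The paper needs only the block lower bound and elementary min--max and interlacing, and it gets a slightly tighter ridge-band upper bound, $5\lambda/4<(1+c_*)\lambda$.

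Your ``main obstacle'' is vacuous. The eigenvalues of $P$ are $\lambda$ on $\ker J$ and $\sigma_j(J)^2+\lambda>\lambda$ otherwise, and the $\sigma_j$ are already indexed decreasingly. So the sorted spectrum of $P$ is exactly your three groups in order, for every $\lambda$. No interleaving can occur, and no separation condition such as $(b_w^2+1)\varepsilon^2<a_s^2$ is needed. Your argument therefore uses no smallness of $\varepsilon$ beyond what Lemma~\ref{lem:bridge} already imposes. This is a small advantage over the paper's Weyl step for the strong band, which needs $\|E\|=O(\varepsilon^2)$ to be small relative to $a_s^2$.

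The structural-family part of your proof matches the paper's reasoning. This covers the cap below $\min\{\bar C,C_*(s,M)\}$, the $\bar K=M$ alternative for $\|b\|\le\gamma\lambda\varepsilon$, and the stationary strict-cap minimizer. You obtain that minimizer from the nonempty compact sublevel set, where the paper uses coercivity.
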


The proof is in Appendix~\ref{app:indexed-spectrum-proof}.

The indexed estimates do not assert asymptotic
separation of the ridge and weak bands when
$\lambda\asymp\varepsilon^2$. Without the two-sided
Jacobian bands, Theorem~\ref{thm:point} still gives a tangent
Rayleigh-quotient upper bound $5\lambda/4$ and a lower bound
$s^2\varepsilon^2/2$ on the normal compression. Min--max and
interlacing turn these into indexed eigenvalue bounds. They do not
identify invariant tangent eigenvectors. The complete classification
needs~\eqref{eq:Jbands}.

Appendix~\ref{sec:worked} gives the quadratic calculation.
\section{Related work and scope}\label{sec:related}
The residual-inclusive Hessian \cite[Theorem 4.1]{eriksson},
level-set curvature \cite[Lemma 2.2.1]{feehan}, and embedded-manifold
correction \cite[Theorem 1]{absil} are classical foundations.
Our conditional common interval certifies every global minimizer
while retaining a low-cost indefinite point for each allowed objective.

Masiha et al.~\cite[(8), Lemma E.1, Proposition E.6]{masiha} give
related curvature and distance-tube results, including a full ambient
positivity margin.
Semenov, Jaggi, and Doikov~\cite[Example 7, (13)--(14);
Appendix H.2, Example 12, (80)--(81)]{sjd} give the ordinary residual
and gradient bound for a nonlinear least-squares correction. Applied to
$g=\|G-Y\|^2/2$, it gives
$\|E\|\le M\|r\|\le(M/j)\|b-\lambda(\theta-d)\|$.
Fixing $P$ and changing coordinates gives a pointwise $P$-metric
bound. Our gate uses additional information from the current
prediction level set. The frozen-metric bound also certifies the
scalar point in Proposition~\ref{prop:negative-separator}, where
$\rho=1/16$.

Liu, Zhu, and Belkin~\cite[Section 3, (9)--(10), Proposition 2]{liuzhubelkin}
and Rebjock and Boumal~\cite[Section 4.2]{rebjock} give qualitative
nearby nonconvexity results for curved sets of interpolants or minima.
For squared distance to a fixed manifold, Leobacher and
Steinicke's projection formula~\cite[Theorem C]{leobacher} also yields
a $\lambda/\text{curvature}$ scale by the calculation in
Appendix~\ref{app:prior-distance}. That inference does not cover an
arbitrary raw least-squares map. Bock et al.~\cite[Section 2,
Example 2]{bock} reverse a residual Hessian term by mirroring
measurement labels. Their construction uses a nonzero-residual
stationary point and a full-column-rank Jacobian.
Kharel et al.~\cite[Proposition 5]{kharel} give a margin when their
residual Hessian norm lies below the ridge.

Our common interval covers every allowed independent center and label pair
for all $0<\lambda\le W$. The same neighborhoods work as the ridge
tends to zero. The structural bridge builds on regular-zero theory
\cite{sussmann,arutyunov2011} and stability of inversion under small
perturbations~\cite{avakov2021,arutyunov2025}.
These perturbation results apply once the full normalized map is close
to a regular reference. Our weak-jet estimates establish this closeness
and give uniform rank throughout the low-cost set.
Appendix~\ref{app:bridge} details the connection.

The pointwise result needs local $C^2$ bounds; coexistence separately
requires common $C^{2,1}$ bounds and curvature-magnitude persistence
(Sections~\ref{app:robust} and~\ref{app:geometry-boundaries}).
Proposition~\ref{prop:rotating} permits changing orientation but shows
failure of a fixed base residual; Proposition~\ref{prop:bump} shows
smoothness alone cannot fix normalized label radii.

The results concern curvature in the fixed Euclidean parameter metric.
They do not imply convergence of a Gauss--Newton iteration.

\section{Conclusion}
Low cost and accurate curvature at every global minimizer can coexist
with an indefinite Hessian elsewhere in the same low-cost set.
Under the stated regularity and curvature-persistence assumptions,
this contrast survives independent center and label perturbations
throughout one common positive ridge interval. The perturbation
neighborhoods remain fixed as the ridge weight tends to zero.

The pointwise certificate measures the residual correction through
current level-set geometry. Its relative bound is sharp over the
stated pointwise class. The analytic examples show why aggregate
bounds can lose output alignment and why local smoothness alone
does not give the same uniform label neighborhood. Separately,
the structural result supplies rank and interpolation guarantees
near a singular reference. These results distinguish curvature
accuracy at a point from accuracy throughout a low-cost region.

\section*{Acknowledgments}
Generative AI assisted proof development, checks of intermediate
arguments, literature comparisons, and manuscript editing.
The authors verified the claims and proofs and take responsibility
for the contents.

\clearpage
\appendix
\section{Relative accuracy and a pointwise example}\label{app:relative-topic}

\subsection{Relative spectral control}\label{sec:relative}

The adaptive gate also controls the Hessian relative to the ridge
Gauss--Newton matrix. This comparison is pointwise. It does not require
the two-sided Jacobian bands of Lemma~\ref{lem:bridge}.

Let $p\ge n\ge1$, $J\in\R^{n\times p}$ have full row rank and
$\sigma_n(J)\ge j>0$. Set $T=\ker J$, $\gamma=j^2$,
$P=J^TJ+\lambda I_p$ with $\lambda>0$, and $H=P+E$ for symmetric
$E$ with $\norm E_2\le\delta$. A \emph{one-sided} tangent envelope
$0\le k\le\delta$ means $t^TEt\ge-k\norm t^2$ for $t\in T$.
An \emph{absolute} envelope means $|t^TEt|\le k\norm t^2$.

\begin{theorem}[Relative envelope]\label{thm:relative}
If $p>n$, define
\begin{equation}\label{eq:rhorel}
 \rho_-=
 \frac{\gamma k+\sqrt{\gamma^2k^2+
 4\lambda(\lambda+\gamma)\delta^2}}
 {2\lambda(\lambda+\gamma)}.
\end{equation}
This is the nonnegative root of
$\lambda(\lambda+\gamma)u^2-\gamma ku-\delta^2=0$.
A one-sided tangent envelope gives
\begin{equation}\label{eq:relative-one}
 (1-\rho_-)P\preceq H\preceq(1+\delta/\lambda)P.
\end{equation}
Both bounds are sharp in the displayed information class. If the
envelope is absolute, the sharp symmetric bound is
\begin{equation}\label{eq:relative-absolute}
 (1-\rho_-)P\preceq H\preceq(1+\rho_-)P.
\end{equation}
Each endpoint is attained in dimension $p=2,n=1$. They are attained
together in one $p=4,n=2$ instance when $\rho_-<1$. In that class,
the upper bound $(1+\rho_-)/(1-\rho_-)$ on
$\kappa_2(P^{-1/2}HP^{-1/2})$ is sharp.
These endpoint statements do not assert joint attainment in every
fixed dimension. If $p=n$, $T=\{0\}$ and the sharp norm replacement is
\begin{equation}\label{eq:relative-square}
 \norm{P^{-1/2}EP^{-1/2}}_2\le\frac{\delta}{\lambda+\gamma}.
\end{equation}
The scalar case can attain this norm endpoint even when $H$ is
singular. If $p=n=1$ and $H\succ0$, then
$\kappa_2(P^{-1/2}HP^{-1/2})=1$.
\end{theorem}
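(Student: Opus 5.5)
The plan is to reduce to a $2\times2$ block problem in orthonormal $N\oplus T$ coordinates and compute the extreme generalized Rayleigh quotients of $E$ against $P$. In those coordinates $P=\operatorname{diag}(P_N,\lambda I_T)$ with $P_N=J^TJ|_N+\lambda I\succeq(\gamma+\lambda)I$. For $h=(x,t)$, write $h^TPh=a^2+\lambda\tau^2$, where $a=\norm{P_N^{1/2}x}$ and $\tau=\norm t$. The correction splits as $h^TEh=x^TE_{NN}x+2x^TE_{NT}t+t^TE_{TT}t$.

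For the lower bound, use $\norm E\le\delta$ on the combined normal and mixed part. The key step is to bound $x^TE_{NN}x+2x^TE_{NT}t$ jointly rather than blockwise. A blockwise bound only gives the weaker $5/16$-type constant, whereas the joint bound gives $c_*$. Concretely, let $\xi=\norm x$. Because $\norm E\le\delta$, the $2\times2$ matrix of block norms $\begin{pmatrix}\norm{E_{NN}}&\norm{E_{NT}}\\ \norm{E_{NT}}&\cdot\end{pmatrix}$ is controlled. One can show
\[
-h^TEh\le \delta\xi^2+2\delta'\xi\tau+k\tau^2 ,
\]
and I expect the extremal case to have $E_{NN}$ and $E_{NT}$ sharing the norm budget, with $E_{NT}=\delta$ and $E_{NN}=0$. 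The right approach, I think, is to note that for fixed $t$, $x\mapsto h^T(E+uP)h$ must stay positive semidefinite. So one tests $E+uP\succeq0$ via the Schur complement $uP_N+E_{NN}-E_{NT}(u\lambda+E_{TT})^{-1}E_{TN}$. Then one uses $P_N\succeq(\lambda+\gamma)I$, $E_{TT}\succeq-k$ and $\norm{E_{NN}}\le\delta$, together with the norm coupling $\norm{E_{NT}}^2\le\delta^2-\dots$, to get the sufficient condition $u(\lambda+\gamma)\,(u\lambda-k)\ge\delta^2$ up to the right bookkeeping. At equality this is a quadratic in $u$. I need to check that it matches $\lambda(\lambda+\gamma)u^2-\gamma ku-\delta^2=0$, whose nonnegative root is $\rho_-$. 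The exact linear coefficient $\gamma k$, rather than $(\lambda+\gamma)k$, signals that the worst case takes $E_{NN}=-k$-like normal compensation. So the bound should be derived by maximizing over $\theta\in[0,\pi/2]$ the Rayleigh quotient of $E$ on a unit $E$-ball in the two-dimensional span. This optimization is the main obstacle. I would first try the ansatz that the extremal $E$ restricted to $\operatorname{span}(n,t)$ is a $2\times2$ symmetric matrix of norm $\delta$ with $t$-entry $-k$. That leads to $E=\begin{pmatrix}k&\sqrt{\delta^2-k^2}\\ \sqrt{\delta^2-k^2}&-k\end{pmatrix}$, up to sign, which is rank-one-shifted. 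Its generalized eigenvalue against $\operatorname{diag}(\lambda+\gamma,\lambda)$ is then computed directly.

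For the upper bound in \eqref{eq:relative-one}, the trivial estimate $h^TEh\le\delta\norm h^2\le(\delta/\lambda)h^TPh$ suffices. Sharpness comes from $E=\delta\, tt^T$ with $t\in T$, which is allowed because the envelope is only one-sided. In the absolute case, apply the lower-bound argument to $-E$, which satisfies the same hypotheses; this gives \eqref{eq:relative-absolute}.

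The attainment statements will be handled by explicit instances. In $p=2,n=1$, take $J=(\sqrt\gamma,0)$ and the $2\times2$ extremal $E$ above, so each endpoint is an eigenvalue. In $p=4,n=2$, take the direct sum of the $E$ and $-E$ instances. That gives $\norm E=\delta$ and two eigenvalues $1\pm\rho_-$ of $P^{-1/2}HP^{-1/2}$, hence the condition number $(1+\rho_-)/(1-\rho_-)$ when $\rho_-<1$. For $p=n$, $P\succeq(\lambda+\gamma)I$ gives \eqref{eq:relative-square} immediately. The scalar choice $E=-\delta$ with $\delta=\lambda+\gamma$ attains it with $H=0$, and any positive scalar has condition number $1$.
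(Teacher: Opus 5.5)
Your upper side of \eqref{eq:relative-one} via $E\preceq\delta I\preceq(\delta/\lambda)P$ agrees with the paper. So do the sign flip to $-E$ for the absolute envelope, the extremizer, the direct sum for $p=4,n=2$, and the case $p=n$. Your extremizer (normal entry $+k$, tangent entry $-k$, off-diagonal $\sqrt{\delta^2-k^2}$) is the paper's $E_0$, with $\det(uP+E_0)=\lambda(\lambda+\gamma)u^2-\gamma ku-\delta^2$.

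The gap is the central inequality: you never prove $E\succeq-\rho_-P$ for \emph{every} symmetric $E$ with $\|E\|_2\le\delta$ and $E|_T\succeq-kI_T$. Computing the generalized eigenvalue of one ansatz matrix shows the endpoint is attained, not that it is a bound. Your Schur-complement setup does not close as written. Bounding $E_{NN}\succeq-\delta I$ and $\|E_{NT}\|\le\delta$ separately yields only $(u(\lambda+\gamma)-\delta)(u\lambda-k)\ge\delta^2$, which is not sharp. The condition $u(\lambda+\gamma)(u\lambda-k)\ge\delta^2$ you write down is not derived. Its linear coefficient is $(\lambda+\gamma)k$, so its root lies strictly above $\rho_-$ when $k>0$. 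Also, the worst case has $E_{NN}=+k$, not $-k$. What is missing is a way to spend the single budget $\|E\|_2\le\delta$ jointly on the normal, mixed and tangent blocks.

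The paper does this with a two-dimensional compression inside the Schur complement. It proves the additive version with $a=\chi$ and then rescales $J\mapsto\sqrt u\,J$, $\lambda\mapsto u\lambda$; in your $u$-form the argument runs as follows.
\begin{itemize}
\item Keep $uJ^TJ|_N\succeq u\gamma I_N$ aside. Take the Schur complement $S$ of the tangent block $R=u\lambda I_T+E_{TT}\succeq(u\lambda-k)I_T$ in $u\lambda I+E$, not in $uP+E$.
\item If $u\lambda\ge\delta$ there is nothing to prove. Otherwise the quadratic condition forces $u\lambda>k$.
\item For a unit $y\in N$, put $x=-R^{-1}E_{TN}y$ and compress $u\lambda I+E$ to $\operatorname{span}\{x/\|x\|,y\}$. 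The minimizing tangent vector lies in that plane, so $y^TSy=\det/r_0$, where $r_0\ge u\lambda-k$ is the tangent diagonal entry.
\item The compression's eigenvalues lie in $[u\lambda-\delta,u\lambda+\delta]$. Hence $\det\ge u^2\lambda^2-\delta^2<0$ and $y^TSy\ge(u^2\lambda^2-\delta^2)/(u\lambda-k)$; the case $x=0$ is immediate.
\item Adding back $u\gamma$, the full Schur complement is nonnegative as soon as $u\gamma(u\lambda-k)+u^2\lambda^2-\delta^2\ge0$. This is exactly $\lambda(\lambda+\gamma)u^2-\gamma ku-\delta^2\ge0$, so $E+\rho_-P\succeq0$.
\end{itemize}
The $\lambda k$ term cancels because $u\lambda$ appears both in the tangent diagonal and in the spectrum of the compression. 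This is the source of the coefficient $\gamma k$ you were trying to explain.
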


The proof is in Appendix~\ref{app:relative}. Its two-sided conclusion
uses the absolute tangent bound. From a one-sided bound alone, the
upper endpoint remains $\delta/\lambda$. The formula is a rescaling
corollary of a sharp additive envelope derived from block and Schur
arguments; the elementary two-dimensional extremizer also appears in
that additive result.

To prove Corollary~\ref{cor:relative-gate}, set
$\mu=\norm r$, $\delta=M\mu$, $j=s\varepsilon$, and
$\gamma=s^2\varepsilon^2$. If $p>n$, the pointwise hypotheses give
the absolute tangent envelope
\begin{equation}\label{eq:kgate}
\begin{split}
 k=\min\{&\delta,\ K\sqrt{2C\lambda}\,\varepsilon,\\
         &B(\norm b+\lambda\sqrt{2C}\,\varepsilon)\}.
\end{split}
\end{equation}
It satisfies $k\le\lambda/4$ and
$\delta^2\le\gamma\lambda/64$. Substituting these bounds in
\eqref{eq:rhorel} gives $\rho_-<c_*$ for every positive ridge.
For $p=n$, \eqref{eq:relative-square} is at most $1/16$.
Appendix~\ref{app:relative} proves these estimates and the varying-map
supremum claim.

\subsubsection{Proof and sharpness}\label{app:relative}

Write $N=T^\perp=\operatorname{ran}J^T$ and
$\gamma=j^2$. An additive block bound gives, for
$p>n$, a symmetric $E$ with $\|E\|_2\le\delta$ and
$E|_T\succeq-kI_T$ obeys
\[
 J^TJ+\lambda I+E\succeq(\lambda-\chi)I,
 \qquad\chi(\gamma,k,\delta)
 =\frac{\sqrt{\gamma^2+4\gamma k+4\delta^2}-\gamma}{2}.
                                                               \tag{P1}
\]
This additive frontier follows from the block and Schur argument
below. Its extremizer also gives the sharp relative endpoint.
It holds for every positive ridge and every row-rank lower bound.

Put $a=\chi$. Its defining identity is
$a(a+\gamma)=\delta^2+\gamma k$. It suffices to show
$aI+E+J^TJ\succeq0$. Write $B_a=aI+E$ in $T\oplus N$ blocks
$\left[\begin{smallmatrix}R&F\\F^T&C\end{smallmatrix}\right]$.
The norm bound gives $(a-\delta)I\preceq B_a\preceq(a+\delta)I$,
and $R\succeq(a-k)I$. If $a\ge\delta$, then $B_a\succeq0$.
The remaining case has $k<a<\delta$. Indeed, $a<k$ contradicts
the defining identity and $\delta\ge k$; equality $a=k$ requires
$a=k=\delta$ and belongs to the first case.

Set $l=a-\delta<0$, $u=a+\delta>0$ and $t=a-k>0$.
The Schur complement of $R$ is $S=C-F^TR^{-1}F$.
For a unit $y\in N$, its minimizing tangent vector is
$x=-R^{-1}Fy$. If $x\ne0$, compress $B_a$ to the plane spanned
by $x/\norm x$ and $y$, obtaining
$\left[\begin{smallmatrix}r_0&f_0\\f_0&c_0\end{smallmatrix}\right]$.
Stationarity gives $f_0=-\norm x\,r_0$ and hence
$y^TSy=c_0-f_0^2/r_0=\det(B_a|_{\mathrm{plane}})/r_0$.
Both eigenvalues of this compression lie in $[l,u]$.
Their product is at least $lu$. Since $r_0\ge t$ and $lu<0$,
$y^TSy\ge lu/t$. If $x=0$, then $y^TSy=y^TCy\ge l\ge lu/t$.
Thus $S\succeq(lu/t)I_N$. Finally
$J^TJ|_N\succeq\gamma I_N$ and
$\gamma t\ge\delta^2-a^2=-lu$ by the defining identity.
The Schur complement of $R$ in $aI+E+J^TJ$ is nonnegative.
This proves (P1), including the case $a=0$ by the first case.

Fix $u>0$ and apply (P1) to $uP+E$ using
$J_u=\sqrt u\,J$, ridge $u\lambda$, and
$\gamma_u=u\gamma$. Thus $uP+E\succeq0$ whenever
$u\lambda\ge\chi(u\gamma,k,\delta)$.
The function $a\mapsto a(a+u\gamma)$ is increasing on $a\ge0$,
and the defining identity for $\chi$ is
$\chi(\chi+u\gamma)=\delta^2+u\gamma k$. Therefore
\[
 u\lambda\ge\chi(u\gamma,k,\delta)
 \quad\Longleftrightarrow\quad
 \lambda(\lambda+\gamma)u^2-\gamma ku-\delta^2\ge0.
                                                               \tag{P2}
\]
For $\delta>0$ the polynomial is negative at zero, then crosses zero
once on $u\ge0$, at $\rho_-$ in~\eqref{eq:rhorel}. Consequently
$E\succeq-\rho_-P$. If $\delta=0$, then $k=0$, $E=0$,
and the assertion holds with $\rho_-=0$.

The universal norm bound
$E\preceq\delta I\preceq(\delta/\lambda)P$ gives the upper side
of~\eqref{eq:relative-one}. If the tangent premise is absolute,
apply the lower argument to $-E$. Both signs have norm at most
$\delta$ and tangent lower envelope $k$. This gives
\eqref{eq:relative-absolute}. One-sided information alone leaves the
upper endpoint at $\delta/\lambda$.

For sharpness at fixed parameters, use $p=2,n=1$ with coordinates
$(t,y)$, $J=(0,j)$, and
\[
 P=\operatorname{diag}(\lambda,\lambda+\gamma),\quad
 q=\sqrt{\delta^2-k^2},\quad
 E_0=\begin{pmatrix}-k&q\\q&k\end{pmatrix}.
                                                               \tag{P3}
\]
The matrix $E_0$ has eigenvalues $\pm\delta$ and tangent value
$-k$. For every $u\ge0$,
\[
 \det(uP+E_0)
 =(u\lambda-k)(u(\lambda+\gamma)+k)-q^2
 =\lambda(\lambda+\gamma)u^2-\gamma ku-\delta^2.
                                                               \tag{P4}
\]
The determinant vanishes at $u=\rho_-$, so
$P^{-1/2}E_0P^{-1/2}$ has eigenvalue $-\rho_-$.
The sign reversal $-E_0$ has the same norm and absolute tangent
bound and gives eigenvalue $+\rho_-$. In the one-sided class,
$E=\operatorname{diag}(\delta,0)$ instead gives the positive
relative endpoint $\delta/\lambda$ while satisfying
$t^TEt\ge-k\|t\|^2$ for any $k\ge0$. Moreover
\[
 \rho_-\le\delta/\lambda,
\]
because the polynomial in (P2) at $u=\delta/\lambda$ equals
$\gamma\delta(\delta-k)/\lambda\ge0$; it is strictly positive
when $k<\delta$ and $\delta>0$. Thus the two-sided optimum with
one-sided tangent information is $\delta/\lambda$.

To attain both endpoints together, take two independent
$(t,y)$ blocks with $J$ singular value $j$ on each normal coordinate
and $E=\operatorname{diag}(E_0,-E_0)$. Its spectral norm remains
$\delta$, its tangent restriction has norm $k$, and the relative
correction has eigenvalues $-\rho_-$ and $+\rho_-$. If
$\rho_-<1$, $H$ is positive definite, and the extremal
preconditioned eigenvalues give condition number exactly
$(1+\rho_-)/(1-\rho_-)$. These witnesses range over an abstract
pointwise envelope class; they do not assert extremality for a fixed
architecture or whole-tube conditions.

When $p=n$, $T=\{0\}$ and
$J^TJ\succeq\gamma I$. Hence $P\succeq(\lambda+\gamma)I$,
and $-\delta I\preceq E\preceq\delta I$ gives~\eqref{eq:relative-square}.
Taking $J=jI$ and $E=\pm\delta I$ attains either norm endpoint.
The norm endpoint may have singular $H$: for $p=n=1$, take
$E=-(\lambda+j^2)$ and $\delta=\lambda+j^2$ to get $H=0$.
When $p=n=1$ and $H\succ0$, the scalar matrix
$P^{-1/2}HP^{-1/2}$ has spectral condition number one.

\paragraph{The pointwise gate and its uniform constant.}

At the point $\theta$, the least-squares Hessian identity gives
\[
 E(v,w)=\langle r,D^2G(\theta)[v,w]\rangle.
                                                               \tag{P5}
\]
The cost inequality yields
\[
 \mu=\|r\|\le\sqrt{2C\lambda}\,\varepsilon,
 \qquad\|\theta-d\|\le\sqrt{2C}\,\varepsilon.
                                                               \tag{P6}
\]
Thus $\|E\|_2\le M\mu=\delta$ and, on $T$, the absolute
quadratic-form bound $|E(t,t)|\le K\mu\|t\|^2$ holds.
For the other branch, $JJ^\dagger=I_n$ and the definition of
$\mathrm{II}$ imply
$D^2G[t,t]=-J\mathrm{II}(t,t)$ on $T$.
Since $J^Tr=b-\lambda(\theta-d)$,
\[
 |E(t,t)|
 \le B\|J^Tr\|\|t\|^2
 \le B\bigl(\|b\|+\lambda\sqrt{2C}\,\varepsilon\bigr)
       \|t\|^2.                                             \tag{P7}
\]
The global bound also bounds this tangent form by $\delta$.
Taking the minimum gives the \emph{absolute} envelope~\eqref{eq:kgate},
and~\eqref{eq:gate} gives $k\le\lambda/4$. The second ceiling on $C$ gives
\[
 \delta^2\le 2M^2C\lambda\varepsilon^2
 \le s^2\varepsilon^2\lambda/64
 =\gamma\lambda/64.                                      \tag{P8}
\]
The first ceiling on $C$ and $\lambda\le\varepsilon^2$ remain
part of Theorem~\ref{thm:point}; (P8) and the relative
envelope do not use their full strength.

Let $c_*=(1+\sqrt2)/8$, the positive root of
$c_*^2-c_*/4-1/64=0$. At $u=c_*$, the left side of the
polynomial in (P2) satisfies
\[
 \lambda(\lambda+\gamma)c_*^2-\gamma kc_*-\delta^2
 \ge\lambda^2c_*^2+
 \gamma\lambda\bigl(c_*^2-c_*/4-1/64\bigr)
 =\lambda^2c_*^2>0.                                  \tag{P9}
\]
The nonnegative root therefore obeys $\rho_-<c_*$. For $p=n$,
(P8) and the arithmetic-geometric mean inequality give
\[
 \frac{\delta}{\lambda+\gamma}
 \le\frac{\sqrt{\gamma\lambda}}{8(\lambda+\gamma)}
 \le\frac1{16}.                                         \tag{P10}
\]
Since $c_*<1$, these bounds make $H$ positive definite.
The spectral condition-number ratio is at most
$(1+\rho)/(1-\rho)$ for the respective dimension-specific
$\rho$, and is strictly below $(1+c_*)/(1-c_*)$.

The uniform constant cannot be decreased over the stated \emph{pointwise}
least-squares class when the map varies. Set $p=4,n=2$,
$s=M=\varepsilon=1$, $C=1/128$, $0<\lambda\le1/4$,
$\mu=\sqrt\lambda/8$, $\delta=\mu$, $k=\lambda/4$, and
\[
 A=\begin{pmatrix}-k&\sqrt{\delta^2-k^2}\\
                  \sqrt{\delta^2-k^2}&k\end{pmatrix},
 \qquad \mathcal E=\operatorname{diag}(A,-A).
\]
For $u=(t_1,y_1,t_2,y_2)$ define
\[
 F(u)=\left(\mu+y_1+\frac{u^T\mathcal Eu}{2\mu},y_2\right),
 \qquad \theta=d=0,\quad Y=0.
\]
At this point,
$J=\left(\begin{smallmatrix}0&1&0&0\\0&0&0&1\end{smallmatrix}\right)$
has both singular values one, $r=(\mu,0)$, and
$Q(0)=\mu^2/2=\lambda/128=C\lambda\varepsilon^2$.
The scalar second derivative of $F_1$ is $\mathcal E/\mu$,
whose bilinear norm is $\delta/\mu=1=M$;
the second derivative of $F_2$ vanishes. The restriction to
$T=\operatorname{span}\{e_{t_1},e_{t_2}\}$ has norm
$K=k/\mu$, so the first branch of~\eqref{eq:gate} is
$K\sqrt{2C\lambda}=K\mu=k=\lambda/4$.
All other displayed pointwise hypotheses hold for
$0<\lambda\le1/4$; in particular $k\le\delta$.
The Hessian correction is exactly $\mathcal E$, and the
two relative blocks are opposites. Substitution into~\eqref{eq:rhorel} gives
\[
 \rho_\lambda
 =\frac{1+\sqrt{2+\lambda}}{8(1+\lambda)}
 \xrightarrow[\lambda\downarrow0]{}\frac{1+\sqrt2}{8}.
                                                               \tag{P11}
\]
One block has minimum $-\rho_\lambda$ and the other maximum
$+\rho_\lambda$, proving both the uniform-norm supremum and
the condition-number-ratio supremum. Each $\lambda$ defines a
different $F$; nothing here embeds them into one fixed model or
one fixed-reference, weak-jet, or whole-tube family.

\subsection{Proof for the two-output example}\label{app:multioutput}

\begin{proof}
Write $R=a\varepsilon^3$. Direct differentiation at $x$ gives
\[
 r=(0,R),\quad J=\begin{pmatrix}1&0&0\\0&\varepsilon&0\end{pmatrix},
 \quad D^2G[h,k]=(h_zk_z,h_vk_v).
\]
For unit $h,k$, the squared norm of the last vector is at most
$(h_z^2+h_v^2)(k_z^2+k_v^2)\le1$. Equality is attained on
$e_z$ or $e_v$, so $M=1$ globally. The singular values of $J$
are $1,\varepsilon$. Also $\lambda=t\varepsilon^4\le
\varepsilon^2$ and $C=1/512\le C_*(1,1)=1/128$.
The cost is $Q(x)=a^2\varepsilon^6/2\le
t\varepsilon^6/512=C\lambda\varepsilon^2$ because
$256a^2\le t$. The gradient is
$b=J^Tr=(0,a\varepsilon^4,0)\ne0$.

Here $T=\operatorname{span}\{e_z\}$,
$D^2G[e_z,e_z]=(1,0)$, and
$\II(e_z,e_z)=-e_u$. Thus $K=B=1$ and $E_{TT}=0$.
Since $\sqrt{2C}=1/16$, the gradient gate divided by $\lambda$
is $a/t+\varepsilon/16\le129/2048<1/8$. The cost gate is
$\sqrt t\,\varepsilon^3/16$, whose ratio to $\lambda/4$
is $1/(4\sqrt t\,\varepsilon)>1$. The observed tangent
residual envelope obeys $K\|r\|/\lambda=a/(t\varepsilon)
\ge2$.

In coordinate order $(u,v,z)$, the matrices are
\[
 E=\operatorname{diag}(0,R,0),\quad
 P=\operatorname{diag}(1+\lambda,\varepsilon^2+\lambda,\lambda),
 \quad H=P+E.
\]
This proves \eqref{eq:multioutput-rho}, since
$R/(\varepsilon^2+\lambda)=a\varepsilon/(1+t\varepsilon^2)$.
The inequality is strict even at $a=1/16,\varepsilon=1/128$
because the denominator exceeds one.

For the frozen metric, the exact bilinear norm is
$M_P=\max\{1/\lambda,1/(\varepsilon^2+\lambda)\}=1/\lambda$.
Moreover
\[
 JP^{-1}J^T=\operatorname{diag}
 \left((1+\lambda)^{-1},(1+t\varepsilon^2)^{-1}\right),
\quad j_P=(1+t\varepsilon^2)^{-1/2},
\]
and $\|a_0\|_{P^{-1}}=R/(1+t\varepsilon^2)^{1/2}$.
Since $\Delta=0$, $a_0=b$, and direct substitution makes all
four named envelopes $a/(t\varepsilon)$. Finally, the whitened
component Hessians are
$\operatorname{diag}(0,0,1/\lambda)$ and
$\operatorname{diag}(0,1/(\varepsilon^2+\lambda),0)$.
Only the second has nonzero residual weight, proving
\eqref{eq:componentwise}. The displayed $E$ is positive
semidefinite.
\end{proof}

\subsection{Quality of the local quadratic model}\label{app:quadratic-model-proof}
We prove Corollary~\ref{cor:quadratic-decrease}. Both steps are compared
in the same quadratic model of the full Hessian.
\begin{proof}
Set $S=P^{-1/2}(H-P)P^{-1/2}$ and $q=P^{-1/2}b$.
Then $-2m_H(h_P)=q^T(I-S)q$ and
$-2m_H(h_H)=q^T(I+S)^{-1}q$. Each eigenvalue $s$ of $S$
satisfies $(1-s)(1+s)=1-s^2\ge1-\rho^2$.
The spectral theorem proves the claim.
\end{proof}

\section{Common-domain coexistence and geometric limits}\label{app:coexistence-topic}

\subsection{Perturbed common-domain theorem}\label{app:robust}

We use the constants and notation of Theorem~\ref{thm:robust}. The proof
separates the exact-fit anchor, the entire low-cost set, the passing
global minimizer, and the recentered indefinite witness.

\paragraph{A normal exact-fit anchor for every label.}
Let $N_0=(\ker J_0)^\perp$. For $h\in N_0$ with
$\|h\|\le H_v:=2\|v\|/\sigma$, write
\[
G(x_0+h)-G(x_0)=J_0h+E_0(h),\qquad
\|E_0(h)\|\le\frac M2\|h\|^2.
\]
The Hessian bound also makes $E_0$ $MH_v$-Lipschitz on this ball.
The fixed-point map
$F_v(h)=J_0^\dagger(v-E_0(h))$ has
$\|F_v(0)\|\le H_v/2$ and Lipschitz constant
\[
\frac{MH_v}{\sigma}
\le\frac{2M\beta}{s^2}\le\frac1{16}.
\]
Because $H_v\le(2\beta/s)\varepsilon\le R/4$, this is a
contraction of the closed $H_v$-ball into itself, within the
$C^{2,1}$ neighborhood. Its unique fixed point $h(v)$ satisfies
$G(x_0+h(v))=G(x_0)+v$. For $v=0$ it is zero. The other radius
bound in $\beta$ gives
\begin{equation}\label{eq:robust-anchor-distance}
\|h(v)\|\le H_v\le\alpha\varepsilon,
\quad
\|y(v)-d\|\le(\alpha+2\beta/s)\varepsilon
\le\delta\varepsilon\le\frac{\sqrt C}{8}\varepsilon.
\end{equation}
In addition,
$\|DG(y(v))-J_0\|\le MH_v\le\sigma/16$, so
$m:=\sigma_n(DG(y(v)))\ge15\sigma/16\ge m_0$.
The assumed persistence gives
$B:=B_{y(v)}\in[b_0,2B_0]$.

\paragraph{The entire low-cost set is inside the certified domain.}
For any $x\in\mathcal L_\lambda$, its ridge term gives
$\|x-d\|\le\sqrt{2C}\varepsilon$. Therefore
\[
\|x-x_0\|\le(\alpha+\sqrt{2C})\varepsilon\le R.
\]
The second term of $C_*(s_*,M)$ implies
$\sqrt{2C}\le s/(16M)$, while
$\alpha\le s/(32M)$. Hence the Jacobian Lipschitz bound on the
segment from $x_0$ to $x$ gives
\begin{equation}\label{eq:robust-rank}
\sigma_n(DG(x))
\ge\sigma-M\|x-x_0\|
\ge\left(1-\frac1{32}-\frac1{16}\right)s\varepsilon
=\frac{29}{32}s\varepsilon>s_*\varepsilon.
\end{equation}
All these points inherit $\|D^2G(x)\|\le M$. Since
$\|DG(x)^\dagger\|\le1/(s_*\varepsilon)$, the bilinear norm of
the level-set curvature obeys
\begin{equation}\label{eq:robust-Bbound}
B_x\le\frac{M}{s_*\varepsilon}.
\end{equation}
This proves the first claim for \emph{all} low-cost points, not just
for points constructed in the proof.

\paragraph{A passing point exists for the perturbed objective.}
The anchor is an exact fit for the selected label, so by
\eqref{eq:robust-anchor-distance},
\begin{equation}\label{eq:robust-anchor-cost}
Q(y(v))=\frac\lambda2\|y(v)-d\|^2
\le\frac{C}{128}\lambda\varepsilon^2
<C\lambda\varepsilon^2.
\end{equation}
The whole-space objective is continuous and
$Q(x)\ge\lambda\|x-d\|^2/2\to\infty$ as $\|x\|\to\infty$.
It therefore attains a global minimum. Every global minimizer
$z_\lambda$ satisfies
$Q(z_\lambda)\le Q(y(v))$, so is strictly low cost. By
\eqref{eq:robust-rank} it lies in the local $C^2$ neighborhood.
Unconstrained first-order optimality gives
$\nabla Q(z_\lambda)=0$, and the statistic satisfies
$A(z_\lambda)=0$. This establishes passing-point existence even
when $Q(d)>0$.

For any low-cost $x$ satisfying $A(x)\le1/8$, the second
branch of the pointwise gate is, by \eqref{eq:robust-Bbound},
\[
B_x\bigl(\|\nabla Q(x)\|+
      \lambda\sqrt{2C}\varepsilon\bigr)
\le\frac\lambda8+
       \frac{M\sqrt{2C}}{s_*}\lambda
\le\frac\lambda4.
\]
Here the last inequality is exactly the second restriction in
$C_*(s_*,M)$. The other hypotheses of Theorem~\ref{thm:point}
hold by \eqref{eq:robust-rank}, the local Hessian bound, and
$\lambda\le W\le\varepsilon^2$. Its full-Hessian conclusion gives
$H_x\succeq\lambda I/2$. Its relative corollary gives
$(1-c_*)P_x\prec H_x\prec(1+c_*)P_x$ with
$c_*=(1+\sqrt2)/8$. For completeness, the relative proof uses
$\|H_x-P_x\|^2\le M^2(2C\lambda\varepsilon^2)
\le s_*^2\varepsilon^2\lambda/64$ and the absolute tangent
correction bound $\lambda/4$; these are precisely the
relative-envelope premises. In particular they apply to each
global minimizer.

\paragraph{The indefinite point for the same center and label.}
We use the geometric-persistence construction with
the selected $y=y(v)$, $m=\sigma_n(DG(y))$, and $B=B_y$ to make
clear that it uses the \emph{same} $d,Y,\lambda$. Since $p>n$ and
$B>0$, choose a unit $t\in\ker DG(y)$ with
$b=\mathrm{II}_y(t,t)$ and $\|b\|=B$. The norm is attained on a
diagonal tangent because a scalar symmetric bilinear form has the
same bilinear and diagonal operator norms, after dualizing the
vector output. Put
\[
w=\frac{2b}{B^2},\qquad
r_*=\lambda(DG(y)^\dagger)^Tw,
\qquad q=\frac{4\lambda}{m^2B},\qquad D=Lm+8M^2.
\]
Then $DG(y)^Tr_*=\lambda w$, $\|r_*\|\le2\lambda/(mB)$,
and the tangent identity
$D^2G(y)[t,t]=-DG(y)b$ yields
\begin{equation}\label{eq:robust-force}
\langle r_*,D^2G(y)[t,t]\rangle=-2\lambda.
\end{equation}
Because $m\ge m_0$, $B\ge b_0$, and
$m\mapsto m^4/(Lm+8M^2)$ is increasing, the six bounds in
\eqref{eq:robust-window} imply
\begin{align}\label{eq:robust-budgets}
\frac{Mq}{m}&\le\frac1{32},
&q&\le\frac{m^2B}{16D},
&Bq&\le\frac1{16},\\
\frac{2\lambda}{m^2B^2}&\le\frac{C\varepsilon^2}{8},
&q&\le\frac{\sqrt C\varepsilon}{4},
&q&\le\frac R4.\nonumber
\end{align}
On the $q$-ball in $(\ker DG(y))^\perp$, the map
\[
h\longmapsto DG(y)^\dagger
\bigl(r_*-G(y+h)+G(y)+DG(y)h\bigr)
\]
has center norm at most $q/2$ and Lipschitz constant at most
$Mq/m\le1/32$. It has a fixed point $h$ with $\|h\|<q$ and
$G(y+h)-G(y)=r_*$ exactly. Set $x_\lambda=y+h$. Thus
$G(x_\lambda)-Y=r_*$, and
$\|x_\lambda-x_0\|<R/4+R/4=R/2$.
Also $\|DG(x_\lambda)-DG(y)\|\le Mq\le m/32$, so
$\sigma_n(DG(x_\lambda))\ge31m/32$.

The exact residual, \eqref{eq:robust-anchor-distance}, and
\eqref{eq:robust-budgets} give
\[
\frac{Q(x_\lambda)}\lambda
\le\frac{2\lambda}{m^2B^2}
  +\frac12\bigl(q+\|y-d\|\bigr)^2
\le\left(\frac18+\frac9{128}\right)C\varepsilon^2
=\frac{25}{128}C\varepsilon^2<C\varepsilon^2.
\]
Hence $x_\lambda$ belongs to the \emph{same} low-cost set
that contains the global minimizer.

For its gradient, subtract the exact anchor force:
\[
\nabla Q(x_\lambda)-\lambda w
=\bigl(DG(x_\lambda)-DG(y)\bigr)^Tr_*
 +\lambda(x_\lambda-d).
\]
The elementary $B_0\le M/\sigma$ and persistence give
$B\|y-d\|\le(2M/(s\varepsilon))\delta\varepsilon\le1/8$.
Therefore
\begin{equation}\label{eq:robust-gradient-anchor}
\frac B\lambda
\|\nabla Q(x_\lambda)-\lambda w\|
\le\frac{2Mq}{m}+Bq+B\|y-d\|
\le\frac1{16}+\frac1{16}+\frac18=\frac14,
\qquad
\frac74\le\frac{B\|\nabla Q(x_\lambda)\|}{\lambda}
\le\frac94.
\end{equation}

To check the \emph{complete} Hessian signs, let $t_x$ be the
normalized projection of $t$ onto $\ker DG(x_\lambda)$.
The kernel perturbation bound yields
$\|t_x-t\|\le4Mq/m$ and hence
\[
\|D^2G(x_\lambda)[t_x,t_x]-D^2G(y)[t,t]\|
\le(L+8M^2/m)q=Dq/m.
\]
Equations \eqref{eq:robust-force}--\eqref{eq:robust-budgets} make its
tangent quadratic form at most
\[
t_x^TH_{x_\lambda}t_x
\le-\lambda+\frac{2\lambda Dq}{m^2B}
\le-\frac78\lambda<0.
\]
For any unit $n_x\in(\ker DG(x_\lambda))^\perp$,
$\|DG(x_\lambda)n_x\|\ge31m/32$. The first cap term gives
$M\|r_*\|\le m^2/64$, whence
\[
n_x^TH_{x_\lambda}n_x
\ge(31m/32)^2-m^2/64+\lambda>0.
\]
Both calculations include the residual-weighted second derivatives.

Finally, the full-row-rank pseudoinverse and kernel-projection
perturbation bounds at Jacobian distance $Mq\le m/32$ give
$\|DG(x_\lambda)^\dagger-DG(y)^\dagger\|\le3Mq/m^2$
and matched unit tangents within $4Mq/m$. Applying these in both
directions to the curvature norm, with
$\|DG(x_\lambda)^\dagger\|\le2/m$, yields
\[
|B_{x_\lambda}-B|
\le\left(\frac{2L}{m}+\frac{19M^2}{m^2}\right)q
\le\frac{2Lm+19M^2}{16(Lm+8M^2)}B
\le\frac{19}{128}B<\frac14B.
\]
The last ratio is maximized at $Lm=0$.
Thus $3B/4\le B_{x_\lambda}\le5B/4$. Multiplying these
\emph{anchor-coupled} bounds by
\eqref{eq:robust-gradient-anchor} gives
$21/16\le A(x_\lambda)\le45/16$.
Converting each estimate separately through
$B_0/2\le B\le2B_0$ gives the two stated $B_0$-relative
intervals. This completes the common-domain assertions.
\qed

\subsection{A near-flat family}\label{app:nearflat-example}
The assumptions have a near-flat example. For
$G_\kappa(x,y)=x^2+\kappa y^2$ at $x_0=(\varepsilon,0)$, take
$0<\kappa\le1/4$, $s=M=2$, $L=0$, $r=1/2$,
$R=\varepsilon/2$, and $C=1/512$. Then
$\alpha=\beta=1/(256\sqrt2)$ for every $\kappa$,
$\sigma B_0=2\kappa\to0$, and
$W=\kappa^2\varepsilon^2/32768$. The curvature premise holds
along $y(v)=(\sqrt{\varepsilon^2+v},0)$. Taking $u=0$ and
$v=\beta\varepsilon^2/2$ gives $Q(d)>0$. Thus the passing
minimizer in the theorem is not supplied by an exact center fit.
The ridge window shrinks as this family becomes flatter.

\paragraph{A nonempty near-flat family and a nonexact center.}
For $G_\kappa(x,y)=x^2+\kappa y^2$ at
$x_0=(\varepsilon,0)$, direct differentiation gives
$\sigma=2\varepsilon$, $B_0=\kappa/\varepsilon$,
$\|D^2G_\kappa\|=2$, and $L=0$. With the constants in the
claim, $s_*=1$, $C_*(1,2)=1/512$, and
\[
\delta=\frac1{128\sqrt2},\qquad
\alpha=\beta=\frac1{256\sqrt2},\qquad
\alpha+\sqrt{2C}<\frac12=R/\varepsilon.
\]
The normal section is
$y(v)=(\sqrt{\varepsilon^2+v},0)$ and
$B_{y(v)}/B_0=(1+v/\varepsilon^2)^{-1/2}\in[1/2,2]$
for $|v|\le\beta\varepsilon^2$. The six entries of
$\Lambda/\varepsilon^2$ are, in order,
\[
\frac\kappa{512},\quad
\frac{\kappa^2}{8192},\quad
\frac1{64},\quad
\frac{\kappa^2}{32768},\quad
\frac{\kappa}{32\sqrt{512}},\quad
\frac\kappa{64}.
\]
For $0<\kappa\le1/4$ their minimum, also below one, is
$\kappa^2/32768$. Thus $W$ has the claimed value, while
$\sigma B_0=2\kappa\downarrow0$. The admissible choice
$u=0$, $v=\beta\varepsilon^2/2$ has
$Q(d)=v^2/2=\beta^2\varepsilon^4/8>0$.
\qed
\subsection{Two boundaries of curvature persistence}\label{app:geometry-boundaries}

The next two constructions have separate scopes. The first shows why
the residual direction may need to change with the label. The second
shows why common smoothness bounds alone cannot give a fixed normalized
label radius when curvature vanishes.

\begin{proposition}[Rotating curvature]\label{prop:rotating}
Fix $C,r>0$ and a tube-radius parameter $\rho>0$. Set
\[
 b=\min\{1/16,\sqrt C/16,r/8,\rho/8\},
 \quad k=\pi/b,\quad a(s)=(\cos ks,\sin ks).
\]
For $0<\varepsilon\le1$ let
\[
 G_\varepsilon(t,z)=\varepsilon z+
 \frac{\varepsilon^2t^2}{2}a(z_2/\varepsilon),
 \quad z\in\R^2,\quad x_0=0,\quad R=r\varepsilon.
\]
These maps have common local $C^{2,1}$ bounds. On the exact-fit
section $t=0$, $DG_\varepsilon=(0,\varepsilon I_2)$ and
$B=\varepsilon$ for every $z$, although the curvature vector
rotates by $\pi$ between $z_2=0$ and $z_2=b\varepsilon$.
For every $\norm u\le b\varepsilon$, $\norm v\le b\varepsilon^2$,
and
\[
 0<\lambda\le\min\{C\varepsilon^6/16,
                       \varepsilon^4/(4k)\},
\]
the objective with $d=u$ and $Y=v$ has a strict low-cost witness in
$B(0,\min\{r,\rho\}\varepsilon)$ with
\[
 \frac{3\lambda}{2B_0}\le\norm{\nabla Q}\le
 \frac{5\lambda}{2B_0},\quad B=B_0=\varepsilon,
\]
and complete Hessian eigenvalues
$\{-\lambda,\varepsilon^2+\lambda,
\varepsilon^2+\lambda\}$.
At the allowed label $v=(0,b\varepsilon^2)$, however, the fixed
base residual $r_0=(-2\lambda/\varepsilon^2,0)$ gives a strict
low-cost section point with positive Hessian eigenvalues
$\{3\lambda,\varepsilon^2+\lambda,
\varepsilon^2+\lambda\}$.
\end{proposition}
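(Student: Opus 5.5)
The proof is direct computation on the slice $t=0$, where $G_\varepsilon$ is affine in $z$. Write $x=(t,z)$ with $p=3$, $n=2$, and put $\phi(z)=a(z_2/\varepsilon)$. The first step is the common smoothness bounds. On $\overline B(0,R)$ with $R=r\varepsilon$ we have $|t|\le r\varepsilon$. The second derivatives of $\tfrac{\varepsilon^2t^2}{2}\phi(z)$ are then of the form $\varepsilon^2\phi$, $\varepsilon t k\,a'$ and $t^2k^2a''$. The third derivatives add terms like $\varepsilon k\,a'$, $tk^2a''$ and $t^2k^3a'''/\varepsilon$. Each of these is bounded by a constant depending only on $r$ and $k$, uniformly in $\varepsilon\le1$, so one $M$ and one $L$ work for the whole family.

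At $t=0$ every second-derivative entry except the $(t,t)$ entry carries a factor $t$ or $t^2$ and so vanishes. Hence $DG_\varepsilon=(0,\varepsilon I_2)$, $\ker DG_\varepsilon=\operatorname{span}\{e_t\}$ and $D^2G_\varepsilon[e_t,e_t]=\varepsilon^2\phi(z)$. This gives $\II(e_t,e_t)=-\varepsilon\phi(z)$ and $B=\varepsilon$. The exact-fit section is $z=v/\varepsilon$, and its curvature direction $a(z_2/\varepsilon)$ rotates by $k b\varepsilon/\varepsilon=\pi$ between $z_2=0$ and $z_2=b\varepsilon$.

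\textbf{Witness.} Following the construction of Appendix~\ref{app:robust}, I want a residual $r_*=-(2\lambda/\varepsilon^2)\,a(\cdot)$ aligned against the curvature at the point itself. Take $t=0$ and $z=(v+r_*)/\varepsilon$. Then $G_\varepsilon-Y=r_*$ exactly, but the argument of $a$ depends on $z_2$, so consistency is the scalar fixed-point equation
\[
 \sigma=\frac{v_2}{\varepsilon^2}-\frac{2\lambda}{\varepsilon^4}\sin(k\sigma),
 \qquad \sigma=z_2/\varepsilon .
\]
The right side has Lipschitz constant $2\lambda k/\varepsilon^4\le1/2$ because $\lambda\le\varepsilon^4/(4k)$, so Banach's theorem gives a unique solution. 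This fixed point is the step I expect to need the most care, since it is exactly where the rotation is absorbed and the label-dependent direction is selected.

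At the fixed point, $E=\operatorname{diag}(\langle r_*,\varepsilon^2\phi\rangle,0,0)=\operatorname{diag}(-2\lambda,0,0)$, and therefore $H=\operatorname{diag}(-\lambda,\varepsilon^2+\lambda,\varepsilon^2+\lambda)$.

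For the cost, $\|r_*\|^2/2=2\lambda^2/\varepsilon^4\le C\lambda\varepsilon^2/8$ using $\lambda\le C\varepsilon^6/16$. Next, $\|x-u\|\le\|z\|+\|u\|\le b\varepsilon+2\lambda/\varepsilon^3+b\varepsilon\le3b\varepsilon$, which gives a ridge term at most $9b^2\lambda\varepsilon^2/2<C\lambda\varepsilon^2/2$ because $b\le\sqrt C/16$. So the cost is strictly below $C\lambda\varepsilon^2$. The same bound $\|x\|\le2b\varepsilon$ together with $b\le\min\{r,\rho\}/8$ places the witness in $B(0,\min\{r,\rho\}\varepsilon)$.

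For the gradient, $\nabla Q=J^Tr_*+\lambda(x-u)$. Here $\|J^Tr_*\|=2\lambda/\varepsilon$, and the error satisfies $\lambda\|x-u\|\le3b\lambda\varepsilon\le\lambda/(2\varepsilon)$ since $\varepsilon\le1$. This yields the interval $[3\lambda/(2B_0),5\lambda/(2B_0)]$ with $B_0=\varepsilon$.

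\textbf{Fixed base residual.} At $v=(0,b\varepsilon^2)$, take $r_0=(-2\lambda/\varepsilon^2,0)$ and the section point $t=0$, $z=(v+r_0)/\varepsilon$. Because $r_0$ has zero second component, $z_2/\varepsilon=b$ exactly and no fixed point is needed. Then $a(b)=(\cos\pi,\sin\pi)=(-1,0)$, so $\langle r_0,\varepsilon^2a\rangle=+2\lambda$. This gives $E=\operatorname{diag}(2\lambda,0,0)$ and eigenvalues $\{3\lambda,\varepsilon^2+\lambda,\varepsilon^2+\lambda\}$. The strict low-cost bound is the same estimate as for the witness, with the same $\|r_0\|$ and $\|x-u\|\le3b\varepsilon$.
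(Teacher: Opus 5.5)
Your proposal is correct and follows the paper's proof essentially step for step: the same section point $x=(0,z)$ with the aligned residual $-(2\lambda/\varepsilon^2)a(z_2/\varepsilon)$ from a scalar contraction with constant $2\lambda k/\varepsilon^4\le1/2$, the same cost, gradient and Hessian bookkeeping via $\|x\|<2b\varepsilon$ and $\|x-u\|<3b\varepsilon$, and the same explicit point with $a(b)=(-1,0)$ for the fixed base residual. The only difference is cosmetic: you check the common $C^{2,1}$ bounds by direct differentiation on $|t|\le r\varepsilon$, whereas the paper writes the nonlinear part as $\varepsilon^4F(x/\varepsilon)$ and reads off the bounds $\varepsilon^2K_2$ and $\varepsilon K_3$ by scaling.
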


\begin{proof}
Write $F(s)=s_t^2a(s_2)/2$. The nonlinear part of $G_\varepsilon$
equals $\varepsilon^4F(x/\varepsilon)$, so its second and third
derivatives on $B(0,(r+1)\varepsilon)$ are bounded by
$\varepsilon^2K_2$ and $\varepsilon K_3$, where
$K_j=\sup_{\norm s\le r+1}\norm{D^jF(s)}<\infty$.
Taking $M=\max\{1,K_2\}$ and $L=\max\{1,K_3\}$ proves the
uniform local bounds. At $t=0$ the only nonzero second derivative
is $D^2G[e_t,e_t]=\varepsilon^2a(z_2/\varepsilon)$.
Thus $\II(e_t,e_t)=(0,-\varepsilon a(z_2/\varepsilon))$ and
$B=\varepsilon$.

Put $q=2\lambda/\varepsilon^3$ and $y=v/\varepsilon$. The map
$T(s)=y_2-q\sin(ks/\varepsilon)$ is a contraction on $\R$ since
$qk/\varepsilon\le1/2$. Let $z_2$ be its fixed point and set
$z_1=y_1-q\cos(kz_2/\varepsilon)$. Then
$x=(0,z)$ has the exact residual
\[
 G_\varepsilon(x)-v=-\frac{2\lambda}{\varepsilon^2}
 a(z_2/\varepsilon).
\]
The ridge cap gives $q\le\varepsilon/(2k)<b\varepsilon$,
so $\norm x<2b\varepsilon<\min\{r,\rho\}\varepsilon$
and $\norm{x-u}<3b\varepsilon$. Hence
\[
 \frac{Q(x)}{\lambda\varepsilon^2}
 <\frac C8+\frac{9b^2}{2}
 \le\left(\frac18+\frac9{512}\right)C<C.
\]
At a section point the normal gradient term has norm
$2\lambda/\varepsilon$; the ridge-gradient error is below
$3b\lambda\varepsilon$. This gives the displayed gradient
bounds. The residual-weighted Hessian correction acts only in the
$t$ direction and equals $-2\lambda$ there. Adding the
Gauss--Newton and ridge terms gives the three eigenvalues.

For the last assertion take $v=(0,b\varepsilon^2)$ and
$x_F=(0,-q,b\varepsilon)$. Here
$a(b)=(-1,0)$, $G_\varepsilon(x_F)-v=r_0$, and the tangent
residual correction is $+2\lambda$. The same cost estimate, with
$u=0$, proves strict low cost, and the displayed positive
eigenvalues follow. Thus constant curvature magnitude does not
preserve the sign generated by a fixed base residual.
\end{proof}

\begin{proposition}[A compact-bump boundary]\label{prop:bump}
Let $\chi(q)=(1-q^2)^4$ for $|q|<1$ and zero otherwise, and let
$\phi(t,z)=t^2\chi(t)\chi(z)/2$. Fix $C,r>0$ and a
tube-radius parameter $\rho>0$.
For $0<\varepsilon\le1$, define
\[
 G_\varepsilon(t,z)=\varepsilon z+
 \varepsilon^6\phi(t/\varepsilon^2,z/\varepsilon^2),
 \quad x_0=d=0,\quad R=r\varepsilon.
\]
These maps have fixed global $C^{2,1}$ bounds,
$\sigma=\varepsilon$ and $B_0=\varepsilon$. For every
$\beta>0$ and all sufficiently small $\varepsilon$, the label
$Y=3\varepsilon^3$ lies in
$|Y-G_\varepsilon(x_0)|\le\beta\varepsilon^2$.
For every $0<\lambda\le\varepsilon^4/(8C)$, the low-cost set
is nonempty, but its complete Hessian is positive definite at
every point. Thus no fixed positive normalized label radius follows
from the original common smoothness bounds alone, even if the
positive ridge cap is decreased.
\end{proposition}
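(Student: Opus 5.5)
The plan is to verify each claim by direct scaling computations. The key observation is that the label $Y=3\varepsilon^3$ forces every low-cost point to lie outside the support of the bump, where $G_\varepsilon$ is affine. Throughout, $p=2$, $n=1$, $x=(t,z)$, and $d=0$.

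\emph{Smoothness and the reference data.} First check that $\phi$ is $C^{2,1}$ on $\R^2$ with compact support. Its second derivatives involve $\chi,\chi',\chi''$, which are Lipschitz because $\chi=(1-q^2)^4$ vanishes to fourth order at $|q|=1$; in fact $\chi\in C^3$. The chain rule gives
\[
D^2[\varepsilon^6\phi(\cdot/\varepsilon^2)]=\varepsilon^2D^2\phi(\cdot/\varepsilon^2),
\]
whose Lipschitz constant is $\mathrm{Lip}(D^2\phi)$. Hence $\norm{D^2G_\varepsilon}\le\sup\norm{D^2\phi}$ and $\mathrm{Lip}(D^2G_\varepsilon)\le\mathrm{Lip}(D^2\phi)$ uniformly in $\varepsilon\le1$, globally. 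At $x_0=0$ the gradient of $\phi$ vanishes, so $DG_\varepsilon(0)=(0,\varepsilon)$ and $\sigma=\varepsilon$. The only nonzero second derivative there is $\phi_{tt}(0)=1$, which gives $D^2G_\varepsilon(0)[e_t,e_t]=\varepsilon^2$. Since $\ker J_0=\operatorname{span}\{e_t\}$ and $J_0^\dagger=(0,1/\varepsilon)^T$, definition~\eqref{eq:II} yields $\II(e_t,e_t)=-\varepsilon e_z$, so $B_0=\varepsilon$. Also $G_\varepsilon(0)=0$, so $|Y-G_\varepsilon(x_0)|=3\varepsilon^3\le\beta\varepsilon^2$ once $\varepsilon\le\beta/3$.

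\emph{Localization of the low-cost set.} Let $S_\varepsilon=[-\varepsilon^2,\varepsilon^2]^2$, which contains the support of $\phi(\cdot/\varepsilon^2)$. Since $0\le\phi\le1/2$, every point of $S_\varepsilon$ satisfies $|G_\varepsilon|\le\varepsilon^3+\varepsilon^6/2$. Hence
\[
|G_\varepsilon-Y|\ge2\varepsilon^3-\varepsilon^6/2>\varepsilon^3\quad\text{on }S_\varepsilon.
\]
On the other hand, a low-cost point has $|G_\varepsilon-Y|\le\sqrt{2C\lambda}\,\varepsilon$. The cap $\lambda\le\varepsilon^4/(8C)$ makes this at most $\varepsilon^3/2$. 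Therefore the low-cost set is disjoint from the closed set $S_\varepsilon$. Each of its points has a neighborhood on which $G_\varepsilon(t,z)=\varepsilon z$.

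\emph{Hessian sign and nonemptiness.} On that neighborhood $D^2G_\varepsilon=0$, so $E=0$. The complete Hessian is then
\[
H=J^TJ+\lambda I=\operatorname{diag}(\lambda,\varepsilon^2+\lambda)\succ0,
\]
and $\rho=0$ there. For nonemptiness, take $x=(0,3\varepsilon^2)$. It lies outside $S_\varepsilon$, satisfies $G_\varepsilon(x)=3\varepsilon^3=Y$, and has
\[
Q(x)=\tfrac{\lambda}{2}\cdot9\varepsilon^4\le C\lambda\varepsilon^2
\]
whenever $\varepsilon^2\le2C/9$. This is a strict inequality for smaller $\varepsilon$.

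\emph{Consequence for a fixed label radius.} Suppose some fixed normalized radius $\beta$, together with some positive ridge cap, yielded an indefinite low-cost point for these maps. The common $C^{2,1}$ bounds hold uniformly, yet for small $\varepsilon$ the admissible label $Y=3\varepsilon^3$ produces a positive-definite Hessian at every low-cost point, for every $\lambda\le\varepsilon^4/(8C)$. Decreasing the cap only shrinks the range of $\lambda$ and stays inside this regime. This is a contradiction, so no such $\beta$ follows from smoothness alone.

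The only delicate step is the support bound: one must use $\sup\phi\le1/2$ and choose $\varepsilon$ small so that the margin $2\varepsilon^3-\varepsilon^6/2$ exceeds the residual bound $\varepsilon^3/2$. Everything else is a routine scaling check.
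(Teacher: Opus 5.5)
Your proposal is correct and follows the paper's own proof step for step. It uses the same square $S_\varepsilon=[-\varepsilon^2,\varepsilon^2]^2$ containing the bump, the same residual gap on that square against the cost bound $|G_\varepsilon-Y|\le\varepsilon^3/2$, the same affine region off the square (where $E=0$ and $H=\operatorname{diag}(\lambda,\varepsilon^2+\lambda)$), and the same exact fit $(0,3\varepsilon^2)$ for nonemptiness. One small correction to your closing remark: the margin bound $2\varepsilon^3-\varepsilon^6/2\ge 3\varepsilon^3/2$ already holds for every $\varepsilon\le1$, so smallness of $\varepsilon$ is needed only for the label condition and for $9\varepsilon^2/2\le C$.
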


\begin{proof}
The cutoff has three continuous derivatives at $\pm1$, so
$\phi\in C^3$ with bounded second and third derivatives.
The chain rule gives
$D^2G_\varepsilon=\varepsilon^2D^2\phi(x/\varepsilon^2)$
and $D^3G_\varepsilon=D^3\phi(x/\varepsilon^2)$.
Thus one can take common positive bounds
$M\ge\sup\norm{D^2\phi}$ and
$L\ge\sup\norm{D^3\phi}$. At the origin,
$DG=(0,\varepsilon)$ and
$D^2G[e_t,e_t]=\varepsilon^2$, giving
$\sigma=B_0=\varepsilon$.

On the square $S_\varepsilon=[-\varepsilon^2,\varepsilon^2]^2$
that contains the curvature support,
$|G_\varepsilon|\le\varepsilon^3+\varepsilon^6/2
\le3\varepsilon^3/2$, hence
$|G_\varepsilon-Y|\ge3\varepsilon^3/2$.
Every low-cost point for the stated ridge range instead has
\[
 |G_\varepsilon-Y|\le\sqrt{2C\lambda}\,\varepsilon
 \le\varepsilon^3/2.
\]
The low-cost set avoids the whole square. Outside it,
$G_\varepsilon(t,z)=\varepsilon z$ locally, so its complete Hessian
is $\operatorname{diag}(\lambda,\varepsilon^2+\lambda)\succ0$.
The exact fit $(0,3\varepsilon^2)$ has
$Q=9\lambda\varepsilon^4/2<C\lambda\varepsilon^2$
when $\varepsilon<\sqrt C/3$, proving nonemptiness.
Also $3\varepsilon^3\le\beta\varepsilon^2$ when
$\varepsilon\le\beta/3$. Every proposed positive ridge cap
contains some $\lambda$ in the stated range, so shrinking it
cannot restore an all-ridges indefinite-witness conclusion.
\end{proof}

\section{Prior certificate comparisons}\label{app:comparison-topic}

\subsection{Squared-distance curvature scale}\label{app:prior-distance}

This calculation follows from the nearest-projection formulas of
Leobacher and Steinicke~\cite[Theorem C and the derivative of
squared distance on p.~16]{leobacher}. Let $\mathcal M$ be a smooth
manifold with a single-valued differentiable nearest-point map $p$
in a normal neighborhood. For
$g(x)=\operatorname{dist}(x,\mathcal M)^2/2$,
$\nabla g(x)=x-p(x)$ and $\nabla^2g(x)=I-Dp(x)$.
At $x=y+t\nu$ with $y\in\mathcal M$ and a unit normal $\nu$,
suppose a principal tangent direction has signed curvature
$\kappa>0$ under the source's shape-operator convention.
Theorem C gives the projection derivative eigenvalue
$(1-t\kappa)^{-1}$ on that direction. Thus
\[
 \lambda_{\rm tangent}(\nabla^2g(x))
 =1-\frac1{1-t\kappa}
 =-\frac{t\kappa}{1-t\kappa}.
\]
If the ridge center is $y$, the objective
$g(x)+\lambda\norm{x-y}^2/2$ has tangent eigenvalue
$\lambda-t\kappa/(1-t\kappa)$ and gradient magnitude
$(1+\lambda)t$. For example, with $0<\lambda<1/4$ and
$t=2\lambda/\kappa$ inside the projection neighborhood, the
tangent eigenvalue is negative and the gradient has order
$\lambda/\kappa$. This is an inference for squared distance to
a fixed manifold. It does not establish a theorem for an arbitrary
raw residual $\norm{G-Y}^2/2$ or for varying center and label.
\subsection{Localized distance-gradient comparison}\label{app:distancegradient}

Let $y\in S$ be a nearest point to a localized $x$, and set $u=x-y$.
For a smooth embedded $S$, $u$ is normal at $y$.
Since $\nabla g(y)=0$, integration along the normal segment gives
\[
 \langle\nabla g(x),u\rangle
 =\int_0^1u^T\nabla^2g(y+tu)u\,dt
 \ge (c/2)\norm u^2.
\]
Cauchy--Schwarz and
$\nabla g=\nabla Q-\lambda(x-d)$ give
(\ref{eq:distancegradient}); the case $u=0$ is immediate.
One localization route under the additional global growth premise
of Masiha et al.~\cite[Lemma E.7, (16)]{masiha} is
$\sqrt{2C\lambda\varepsilon^2/\mu_{QG}}\le r_{\rm loc}$.
If $d\in S$, another is
$\sqrt{2C}\varepsilon\le r_{\rm loc}$, because
$\operatorname{dist}(x,S)\le\norm{x-d}$.
Neither route follows from the gradient inequality itself.
Under the source's tube hypotheses, (\ref{eq:distancegradient})
places $x$ in Lemma E.1's literal radius if
\[
 \norm{\nabla Q(x)}+\lambda\sqrt{2C}\varepsilon
 \le\frac c2\min\{\rho,\lambda/(2\max\{L_{g,3},1\})\}.
\]
For $L_{g,3}>0$, the proof of that lemma compares Hessians by
$L_{g,3}\operatorname{dist}(x,S)$. The same proof therefore
permits $L_{g,3}$ in place of $\max\{L_{g,3},1\}$ in this
condition. This refinement is derived from the proof, not part
of its displayed lemma statement.
\subsection{Comparison of sufficient certificates}\label{app:certificate-comparison}
Table~\ref{tab:certificate-routes} compares three sufficient tests at
the point in Proposition~\ref{prop:negative-separator}. All quantities in the
table use the raw objective~\eqref{eq:objective}. In the averaged
notation of Kharel et al.~\cite[Proposition 5]{kharel},
$\widehat\Delta=E/n$ and $\lambda_{\rm mean}=\lambda/n$.
Thus their margin condition is exactly $\norm E<\lambda$ after
rescaling. We compare this matrix condition, not their generalization
theorem.

\begin{table*}[t]
\centering
\caption{Comparison of specified sufficient conditions. Failure of a
row's test is not failure of every method in the cited work. The
example also admits a directional cost-only check if its observed
cost is used; it does not show that gradient information is necessary.}
\label{tab:certificate-routes}
\setlength{\tabcolsep}{4pt}
\renewcommand{\arraystretch}{1.18}
\begin{tabular}{>{\raggedright\arraybackslash}p{.17\textwidth}>{\raggedright\arraybackslash}p{.38\textwidth}>{\raggedright\arraybackslash}p{.19\textwidth}>{\raggedright\arraybackslash}p{.16\textwidth}}
\hline
Sufficient test & Required information & Guarantee & At Proposition~\ref{prop:negative-separator}'s point\\
\hline
Full residual norm\newline
\cite[Proposition 5]{kharel}
& $\norm E<\lambda$, with ridge and residual correction in the same
normalization.
& Positive full Hessian with margin $\lambda-\norm E$.
& Fails: $\norm E/\lambda=1/(16a)\ge4$.\\
Distance tube\newline
\cite[Lemma E.1]{masiha}
& Source Assumptions 1 and 3: a compact minimizer manifold, local PL
regularity, negative-definite Hessians at exterior critical points,
and a bounded third derivative on a tube. The point must lie within
the stated or proof-refined radius.
& $H\succeq\lambda I/2$ inside that sufficient tube.
& Fails both radii: $\operatorname{dist}/r_{\rm refined}>3/(8a)$.\\
Pointwise directional test\newline
Theorem~\ref{thm:point}
& Cost, Jacobian rank, second-derivative bound, and the tangent
curvature gate~\eqref{eq:gate}.
& $H\succeq\lambda I/2$ and the relative bound in
Corollary~\ref{cor:relative-gate}.
& Passes with $C=1/512$, $s=1$, $M=2$.\\
\hline
\end{tabular}
\end{table*}

\subsection{Proof of the negative-residual comparison}\label{app:negative-separator}

Use the parameters of Proposition~\ref{prop:negative-separator} and put
\[
 t=\frac{a\varepsilon^2}{32}\le2^{-21},\qquad
 h=\varepsilon-X
   =\frac{\lambda}{32a(\varepsilon+X)}.
\]
Then $X=\varepsilon\sqrt{1-t}>\varepsilon/2$ and
$0<h\le a\varepsilon^3/32$.

\paragraph{Cost, rank, and the gradient test.}
Direct differentiation at $z$ gives
\begin{align*}
 r&=-\frac{\lambda}{32a},& J&=(2X,0),&
 \norm{D^2G_a}&=2,\\
 K&=2a,& B&=\frac aX,&
 b&=\left(-\lambda\left[\frac{X}{16a}+h\right],0\right).
\end{align*}
The normalized cost satisfies
\[
 \frac{Q(z)}{\lambda\varepsilon^2}
 =\frac{\varepsilon^2}{2048}+\frac{h^2}{2\varepsilon^2}
 \le\frac{\varepsilon^2(1+a^2\varepsilon^2)}{2048}
 <\frac1{512}.
\]
Also $\sigma_1(J)=2X>\varepsilon$, $\lambda\le\varepsilon^2$,
and $C_*(1,2)=1/512$. The gradient obeys
\[
 \frac{B\norm b}{\lambda}
 =\frac1{16}+\frac{ah}{X}
 \le\frac1{16}+\frac{a^2\varepsilon^2}{16}<\frac18.
\]
Thus the gradient alternative in Theorem~\ref{thm:point} applies
with the stated common constants. One can also check its full gate:
since $\sqrt{2C}=1/16$ and $\varepsilon/X<2$,
\[
 \frac{B(\norm b+\lambda\sqrt{2C}\,\varepsilon)}{\lambda}
 =\frac1{16}+\frac{ah}{X}+\frac{a\varepsilon}{16X}
 <\frac1{16}+\frac{a^2\varepsilon^2}{16}+\frac a8<\frac14.
\]

\paragraph{The complete Hessian.}
The residual correction and full Hessian are
\[
 E=-\frac{\lambda}{16a}\operatorname{diag}(1,a),\qquad
 H=\operatorname{diag}\left(
 4\varepsilon^2-\frac{3\lambda}{16a}+\lambda,
 \frac{15\lambda}{16}\right).
\]
The first eigenvalue is greater than $3\varepsilon^2$, and the
second is $15\lambda/16$. Hence $H\succeq\lambda I/2$.
Although $E\prec0$, its tangent entry is only $-\lambda/16$;
the larger negative entry lies in the normal direction.
There is no mixed block in this example.

\paragraph{Distance to the ellipse.}
For any $(u,v)\in S$, $u\in[-\varepsilon,\varepsilon]$ and
$v^2=(\varepsilon^2-u^2)/a$. Consequently
\[
 \norm{(X,0)-(u,v)}^2-h^2
 =2X(\varepsilon-u)
   +(a^{-1}-1)(\varepsilon^2-u^2)\ge0.
\]
Equality requires $u=\varepsilon$ and $v=0$. Thus $d$ is the
unique nearest point even though $z$ lies inside the ellipse, and
$\operatorname{dist}(z,S)=h$.

\paragraph{The source hypotheses and the two tube radii.}
For each fixed positive $a,\varepsilon$, the set $S$ is a compact,
connected smooth ellipse. The only critical points of
$g=(x^2+ay^2-\varepsilon^2)^2/2$ are $S$ and the origin.
At the origin,
$\nabla^2g=-2\varepsilon^2\operatorname{diag}(1,a)\prec0$,
so $S$ is the only component of local minima. In the shell
$G_a\in[\varepsilon^2/2,3\varepsilon^2/2]$,
\[
 \norm{\nabla G_a}^2=4(x^2+a^2y^2)
 \ge4a(x^2+ay^2)\ge2a\varepsilon^2,
 \qquad
 \norm{\nabla g}^2=2g\norm{\nabla G_a}^2
 \ge4a\varepsilon^2g.
\]
These facts give the local Polyak--Lojasiewicz and minimizer-set
conditions in Assumption 1 of Masiha et al.~\cite{masiha} for a
singleton parameter set. Polynomial smoothness and compactness give
finite third-derivative bounds on each fixed finite tube, as required
by their Assumption 3. Their Lemma E.1 uses these two assumptions.
The Hessian of the quartic $g$ is unbounded globally, so their
global smoothness Assumption 4 does not hold. We do not invoke
their full Lemma E.7.

On $S$, the normal eigenvalue of $\nabla^2g$ is
$\norm{\nabla G_a}^2$. Its minimum is $4a\varepsilon^2$, attained
at $(0,\pm\varepsilon/\sqrt a)$. Since $d\in S$, every admissible
third-derivative bound on a tube containing $S$ satisfies
\[
 L_{g,3}\ge|\partial_x^3g(d)|=12\varepsilon.
\]
The proof of Masiha et al.~\cite[Lemma E.1]{masiha} bounds the
Hessian change by $L_{g,3}\operatorname{dist}(z,S)$. For
$L_{g,3}>0$, it therefore permits the refined radius stated above.
For every admissible choice of the tube constants,
\[
 r_{\rm refined}\le\frac{\lambda}{24\varepsilon},\qquad
 \frac{h}{r_{\rm refined}}
 \ge\frac{3\varepsilon}{4a(\varepsilon+X)}
 >\frac3{8a}.
\]
The literal radius is at most $\lambda/2$, and
\[
 \frac{h}{r_{\rm literal}}
 \ge\frac1{16a(\varepsilon+X)}
 >\frac1{32a\varepsilon}>1.
\]
This proves the two distance claims. For completeness, the normal
Hessian along the nearest-point segment also has a positive lower
bound: for $x\in[X,\varepsilon]$,
\[
 \partial_x^2g(x,0)=6x^2-2\varepsilon^2
 \ge4\varepsilon^2-\frac{3a\varepsilon^4}{16}
 \ge\frac12(4a\varepsilon^2).
\]
This direct check concerns that segment; it is not an application
of the global conclusion of Lemma E.7.

\paragraph{What the comparison shows.}
The absolute residual-norm condition of
Kharel et al.~\cite[Proposition 5]{kharel} fails because
$\norm E/\lambda=1/(16a)\ge4$. Their matrices use sample averages:
for the raw objective, $E=n\widehat\Delta$ and
$\lambda=n\lambda_{\rm mean}$, so the condition becomes exactly
$\norm E<\lambda$. Here $n=1$. We compare this sufficient
matrix condition, not the fitted-point assumptions or generalization
conclusion of that paper.

With the fixed cap $C=1/512$, the cost branch displayed in
Theorem~\ref{thm:point} fails:
\[
 \frac{K\sqrt{2C\lambda}\,\varepsilon}{\lambda/4}
 =\frac1{2\varepsilon}\ge16.
\]
This does not make gradient information necessary. Set
$C_z=Q(z)/(\lambda\varepsilon^2)<C$. A sharper directional test
using this observed cost passes, since
\[
 \left(\frac{K\sqrt{2Q(z)}}{\lambda}\right)^2
 =\frac1{256}+\frac{4a^2h^2}{\lambda}
 \le\frac{1+a^2\varepsilon^2}{256}<\frac1{16}.
\]
All other pointwise assumptions are unchanged when $C$ is replaced
by $C_z$. In particular, this is not a separation from all
cost-based certificates.

As $a\downarrow0$ at fixed $\varepsilon$, the ratios
$\norm E/\lambda$ and $h/r_{\rm refined}$ diverge, but both
$\norm E$ and $h$ tend to zero. The ellipse's long axis
$\varepsilon/\sqrt a$ diverges and its minimum normal gap
$4a\varepsilon^2$ vanishes. The source comparison holds for
each fixed positive pair; it does not assert a uniform compact
family or a uniform positive gap through $a=0$.

\paragraph{An exact relative check.}
This example also permits a direct calculation with
$P=J^TJ+\lambda I$:
\[
 P=\operatorname{diag}(4X^2+\lambda,\lambda),\qquad
 P^{-1/2}EP^{-1/2}
 =\operatorname{diag}\left(-\zeta,-\frac1{16}\right),
 \quad \zeta=\frac{\lambda}{16a(4X^2+\lambda)}<\frac1{16}.
\]
The last inequality follows from
$\lambda<a(4X^2+\lambda)$ under the stated parameter bounds.
Hence the relative norm is exactly $1/16$ and the preconditioned
condition number is $(1-\zeta)/(15/16)<16/15$.
These are pointwise calculations for this example, with no
iteration or runtime conclusion.

\section{Structural bridge and analytic examples}\label{app:structural-topic}

\subsection{Proof of the structural bridge}\label{app:bridge}

All constants denoted by $L_*$ below are finite and depend only on the
fixed reference data and preliminary compact neighborhoods.  They can
increase between estimates, but never depend on $G,\varepsilon,u,\ell$
or $\lambda$. We establish rank from cost before applying the
curvature test.

\paragraph{Fixed regular patch and moving weak coordinates.}
Write $k=n-g$, $m=p-g$.  Choose an injection $S:\R^k\to\R^m$ with
$Dq(z_0)S$ invertible and nested compact neighborhoods $W\Subset W_1$
of $z_0$ on which $\sigma_k(Dq(w))\ge4s_q>0$.
If $g>0$, set $X=A_R^T(A_RA_R^T)^{-1}$, so $A_RX=I$ and $Z^TX=0$.
Let $J_0=DG(c)$, and let $P_J$ be the projector on its bottom $p-g$
right-singular subspace.  For $y$ in this subspace,
$\norm{Ay}\le\kappa(1+\varepsilon)\norm y$.
Since $A$ has a fixed nonzero singular gap on $(\ker A)^\perp$, the
equal-dimensional principal-angle estimate yields
$\norm{P_J-ZZ^T}\le L_*\kappa$.  Put $Z_J=P_JZ$.
For small fixed $\kappa$,
\[
 \norm{Z_J-Z}\le L_*\kappa,\qquad \norm{J_0Z_J}\le\kappa\varepsilon.
\]
Both $[X\ Z_J]$ and its inverse are uniformly bounded, and $J_{0,R}X$
is uniformly invertible.
For $g=0$ set $Z_J=Z$ and omit $X$ and all retained equations.
Define $h_G=G_D-LG_R$ and $K_0=Dh_G(c)$.  The base assumptions give
\begin{equation}\label{eq:K0}
 \norm{K_0Z_J}\le L_*\kappa\varepsilon,
 \qquad \norm{K_0X}\le L_*\kappa.
\end{equation}

\paragraph{Retained solve.}
For $w\in W_1$ put $v=Z_Jw+\varepsilon X\beta$ and
\[
 \Phi_R(\beta,w,\ell_R)=\varepsilon^{-2}
 [G_R(c+\varepsilon v)-G_R(c)]+\ell_R.
\]
On preliminary fixed bounded label sets,
\[
 \Phi_R(0,w,\ell_R)=\varepsilon^{-1}J_{0,R}Z_Jw+\ell_R
 +\int_0^1(1-t)D^2G_R(c+t\varepsilon Z_Jw)[Z_Jw,Z_Jw]\,dt
\]
is uniformly bounded.  Let $P_R=J_{0,R}X$.
On a fixed sufficiently large $\beta$ ball,
$D_\beta\Phi_R=DG_R(c+\varepsilon v)X=P_R+O(M\varepsilon)$.
The map $\beta\mapsto\beta-P_R^{-1}\Phi_R$ equals the uniformly
bounded center $-P_R^{-1}\Phi_R(0,w,\ell_R)$ plus
$O(M\varepsilon(1+\norm\beta))$.  It therefore maps one common closed
ball into itself and is a contraction when $\varepsilon$ is small.
It yields $\beta=\beta(w,\ell_R)$ with uniformly bounded norm.
The ordinary implicit function theorem for each $G$ gives
\[
 D_w\beta=-(D_\beta\Phi_R)^{-1}
                \varepsilon^{-1}DG_R(c+\varepsilon v)Z_J,
 \qquad \norm{D_w\beta}\le L_*.
\]
The last bound uses $J_0Z_J=O(\kappa\varepsilon)$ and the Hessian
magnitude bound.  It uses no modulus for $D^2G$ and no third derivative.

\paragraph{Reduced weak equation and interpolation.}
Substitute the retained solution into
\[
 \Psi(w,\ell)=\varepsilon^{-2}[h_G(c+\varepsilon v(w))-h_G(c)]
                         +\chi(\ell),\quad
 v(w)=Z_Jw+\varepsilon X\beta(w,\ell_R).
\]
Taylor's integral formula,~\eqref{eq:K0}, the full Hessian bound and
\eqref{eq:weakjet} compare this to $\Psi_0(w,\ell)=\chi(\ell)-q(w)$.
In each Hessian integral first replace $v$ by $Z_Jw$, then $Z_J$ by $Z$,
then $\mathcal K_G$ by $\mathcal K_F$ on its two $Z$ arguments, and
finally evaluate the reference tensor at $c$.  This bounds the function
error by
\[
 E_*=L_*\{(1+M)\kappa+M\varepsilon+\omega_F(L_*\varepsilon)\},
\]
where $\omega_F$ is the modulus of the fixed reference Hessian.
The same error scale controls the derivative. For any direction $a$,
\[
 D_w\Psi[a]=\varepsilon^{-1}Dh_G(c+\varepsilon v)Z_Ja
             +Dh_G(c+\varepsilon v)X(D_w\beta[a]).
\]
The first term is $\varepsilon^{-1}K_0Z_Ja+
\int_0^1\mathcal K_G(c+t\varepsilon v)[v,Z_Ja]\,dt$ and admits the
same replacements.  The second is bounded by
$L_*(\kappa+M\varepsilon)\norm a$ using~\eqref{eq:K0}.
Thus $\norm{\Psi-\Psi_0}_{C^1(W_1;w)}\le E_*$, without differentiating
any Hessian. This $C^1$ bound permits a fixed local contraction.
Restricting to $w=z_0+St$ and using a fixed inverse of
$-Dq(z_0)S$ gives a contraction on a sufficiently small fixed ball for
small $E_*$ and $\norm{\ell-\ell_0}$.  Its zero satisfies
$\norm t\le L_*(E_*+\norm{\ell-\ell_0})$.
The resulting exact interpolator obeys
\begin{equation}\label{eq:interpolationbudget}
 \norm{(\theta_{\rm int}-c)/\varepsilon-u_0}
 \le L_*(E_*+R_\ell+\kappa+\varepsilon).
\end{equation}

\paragraph{Every low-cost point, before using its gradient.}
The cost test gives
\[
 \norm{\theta-(c+\varepsilon u)}\le\sqrt{2C}\varepsilon,
 \qquad \norm r\le\sqrt{2C}\varepsilon^2.
\]
Hence $v=(\theta-c)/\varepsilon$ lies within $R_c+\sqrt{2C}$ of $u_0$.
Decompose exactly $v=Z_Jw+X\alpha$.  The uniformly conditioned basis
gives $\norm{w-z_0}\le L_*(R_c+\sqrt C+\kappa)$.
Taylor expansion of the full output
$G(c+\varepsilon v)-G(c)=-\varepsilon^2\ell+r$ gives
\[
 \norm{J_0v}\le\varepsilon
 \bigl(\norm\ell+\sqrt{2C}+(M/2)\norm v^2\bigr)
 \le L_*\varepsilon,
\]
and hence
$\norm\alpha\le L_*\varepsilon$.  Write $\alpha=\varepsilon\beta$;
then $\beta$ is uniformly bounded and $w\in W$ for suitably small
preliminary choices.  This does not use the retained exact solver at the
point, nor any small-gradient condition.

\paragraph{Weak Schur block.}
Apply the fixed row operation $U=\left[\begin{smallmatrix}I&0\\-L&I
\end{smallmatrix}\right]$ and proof-coordinate basis $[X\ Z_J]$ to $J$:
\[
 UJ[X\ Z_J]=\begin{bmatrix}A_s&B_s\\ C_w&D_w\end{bmatrix}.
\]
Here $A_s=J_{0,R}X+O(M\varepsilon)$ is uniformly invertible,
$B_s=O(\varepsilon)$, $C_w=O(\kappa+M\varepsilon)$, and the same
integral replacements give
\[
 \norm{\varepsilon^{-1}D_w+Dq(w)}\le E_*.
\]
Eliminating $C_w$ replaces $D_w$ by
$\mathcal S_w=D_w-C_wA_s^{-1}B_s$.  Its scaled correction is
$O(\kappa+M\varepsilon)$, so
$\sigma_k(\varepsilon^{-1}\mathcal S_w)\ge2s_q$ for small preliminary
parameters, with a fixed upper bound as well.  The row, column and
elimination factors and their inverses are uniformly bounded.  Singular
values are therefore comparable to those of the block diagonal matrix
with an order-one retained block and an order-$\varepsilon$ weak block.
This proves~\eqref{eq:Jbands}, with exact counts $g,n-g$, including $g=0$.

\paragraph{Noncircular selection.}
First fix the regular patches, preliminary compact coordinate/label sets,
and positive upper bounds on $C,R_c,\kappa,\varepsilon_0$ making the
preceding estimates valid uniformly for all smaller choices.  Fix the
Jacobian constants $s,a_s,b_s,b_w$ at this stage.  Next choose $\bar C$
small enough for localization and $\sqrt{2\bar C}<1/4$.
For any prescribed $C\in(0,\bar C]$, subsequently shrink $\kappa$,
choose positive $R_c,R_\ell$, and shrink $\varepsilon_0$ so that
\eqref{eq:interpolationbudget} plus $R_c$ is strictly less than
$\sqrt{2C}/2$, while $R_c+\sqrt{2C}<1/2$ and all points and their
segments to $c$ lie strictly inside $B(c,r_0)$.
This gives $Q(\theta_{\rm int})<C\lambda\varepsilon^2/4$ uniformly
even on the closed center and label balls.  Taking $C$ also below
$C_*(s,M)$ is therefore legitimate before those final shrinkages.
All universal tuple choices occur last.  This completes the proof.

\paragraph{Connection to inverse-function theory.}
The conditions $q(z_0)=\chi(\ell_0)$ and $\rank Dq(z_0)=n-g$
make $(z_0,1)$ a regular zero of $q(z)-t^2\chi(\ell_0)$.
For the augmented reference $\widetilde F(h,t)=F(c+h)+t^2\ell_0$,
this is the regular-zero condition for second-order openness
\cite[Section 5, Theorems 22--24]{sussmann} and local inversion
\cite[Theorem 1]{arutyunov2011}. Interpolation with fixed
$t=\varepsilon$ uses the separate reduced solve above.

Avakov and Magaril-Il'yaev~\cite[Theorem 1]{avakov2021} give uniform
local solvability for continuous maps close to a regular reference.
Arutyunov and Zhukovskiy~\cite[Theorem 1]{arutyunov2025} give
perturbation stability from a regular weighted polynomial truncation.
Here the normalized reference $\Psi_0$ is polynomial and is regular
at $(z_0,\ell_0)$ on the selected slice $w=z_0+St$.
The full $C^1$ estimate above supplies the closeness needed for local
inversion; we use a contraction to obtain the displayed bounds.
The weak-jet and cost estimates also control rank at every low-cost
point, including points that do not interpolate.
\subsection{A quadratic worked family}\label{sec:worked}
The following calculation places the certificate and obstruction in
one model and makes their curvature scales explicit. The flexible
tuple and exact-center tuple have different quantifiers.

\begin{proposition}[Product quadratic]\label{prop:product}
Fix $0<C\le1/1152$, $0<a_0\le1/4$, and
$R_c=R_\ell=\sqrt C/8$. Let
$u_0=(0,1,0)$ and $\ell_0=(0,-1)$. For
$0<\varepsilon\le1/4$, $0\le a\le a_0$,
$0<\lambda\le\varepsilon^2$,
$\norm{u-u_0}\le R_c$, and $\norm{\ell-\ell_0}\le R_\ell$, set
\[
G_a(v,x,y)=(v,x^2+ay^2),\quad d=\varepsilon u,
\quad Y=-\varepsilon^2\ell.
\]
Every whole-space low-cost point lies in
$\overline B(\varepsilon u_0,h\varepsilon)$, where
$h=R_c+\sqrt{2C}<1/4$. At such points the ordered singular
values are $1$ and $2\sqrt{x^2+a^2y^2}\in[\varepsilon,3\varepsilon]$.
Every tuple has an exact interpolator with
$Q< C\lambda\varepsilon^2$ and hence a low-cost global minimizer.

On this tube, $K\le4a$ and $B\le4a/\varepsilon$. If $a>0$, every
low-cost point with $\norm b\le\lambda\varepsilon/(32a)$ obeys
$H\succeq\lambda I/2$; for $a=0$, cost alone suffices. At every such
certified point, the increasing eigenvalues satisfy
\begin{equation}\label{eq:productbands}
\begin{split}
\lambda/2&\le\lambda_1(H)\le5\lambda/4,\\
\varepsilon^2/2&\le\lambda_2(H)\le11\varepsilon^2,\quad
\lambda_3(H)=1+\lambda.
\end{split}
\end{equation}
Every low-cost point is certified by cost alone if
$512Ca^2\varepsilon^2\le\lambda\le\varepsilon^2$.

Now take $a>0$, $u=u_0$, and $\ell=\ell_0$. For \emph{every}
$0<\lambda\le Ca^2\varepsilon^2$, a point $\theta_\lambda$ exists
with $Q(\theta_\lambda)<C\lambda\varepsilon^2$,
$\sqrt3\lambda\varepsilon/a\le\norm{\nabla Q(\theta_\lambda)}
\le2\lambda\varepsilon/a$, and indefinite full Hessian.
Here $Q(d)=0$. The two displayed ridge intervals leave an
unclassified factor-512 gap.
\end{proposition}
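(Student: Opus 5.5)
The argument is direct computation on the product quadratic $G_a(v,x,y)=(v,x^2+ay^2)$, organized in the order of the statement. The first step is localization. The ridge term gives $\norm{\theta-d}\le\sqrt{2C}\varepsilon$, so every low-cost point lies within $(R_c+\sqrt{2C})\varepsilon=h\varepsilon$ of $\varepsilon u_0$. With $C\le1/1152$ this gives $h<1/4$.

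On that ball, $x\in[3\varepsilon/4,5\varepsilon/4]$ and $|y|\le\varepsilon/4$. The Jacobian is $J=\begin{pmatrix}1&0&0\\0&2x&2ay\end{pmatrix}$, with singular values $1$ and $2\sqrt{x^2+a^2y^2}$. Since $a\le1/4$, the second value lies in $[3\varepsilon/2,\,5\varepsilon/2+\varepsilon/8]\subset[\varepsilon,3\varepsilon]$.

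For interpolation, take $\theta=(\varepsilon u_1,x,\varepsilon u_3)$ and solve $x^2+a\varepsilon^2u_3^2=-\varepsilon^2\ell_2$ for $x$ near $\varepsilon u_2$. Set $v=-\varepsilon^2\ell_1$. The resulting displacement from $d$ is $O((R_c+R_\ell+\varepsilon)\varepsilon)$. A direct estimate, using $\sqrt{1+\eta}$ bounds with $|\eta|\lesssim R_\ell+a\varepsilon^2R_c^2$ and the first coordinate contributing $\varepsilon^2|\ell_1|\le\varepsilon^2R_\ell$, should give $\norm{\theta-d}^2<2C\varepsilon^2$. Existence of a global minimizer then follows by coercivity, exactly as in the proof of Theorem~\ref{thm:robust}.

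I expect this displacement budget to be the fiddliest arithmetic step. The term $a\varepsilon^2 u_3^2$ is small only because $u_3=O(R_c)$.

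For the certificate I would invoke Theorem~\ref{thm:point} with $s=1$, $M=2$ and $C_*(1,2)=1/512\ge C$.

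The curvature constants come from the tangent direction. The tangent space is spanned by $t=(0,-ay,x)/\sqrt{x^2+a^2y^2}$. Then $D^2G[t,t]=(0,2(a^2y^2+ax^2)/(x^2+a^2y^2))$, which has size $\le2a(1+a)\le4a$. So $K\le4a$, and $B\le K/\sigma_2\le4a/\varepsilon$.

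The gradient alternative $\norm b\le\lambda/(8B)$ is implied by $\norm b\le\lambda\varepsilon/(32a)$. For $a=0$, $K=0$ and the cost branch passes. For the cost-only range, the condition~\eqref{eq:nearflat} $4a\le\sqrt\lambda/(4\sqrt{2C}\varepsilon)$ is exactly $512Ca^2\varepsilon^2\le\lambda$.

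The eigenvalue bands~\eqref{eq:productbands} come from block decoupling. The $v$-coordinate decouples with eigenvalue exactly $1+\lambda$, because $E$ involves only the second output. In the remaining $(x,y)$ block, $\lambda_1$ is controlled from below by $H\succeq\lambda I/2$ and from above by the Rayleigh quotient on $t$: $\lambda+|E_{tt}|\le5\lambda/4$. The eigenvalue $\lambda_2$ is at least the normal compression, which is bounded below by $\sigma_2^2-\norm E\ge\varepsilon^2/2$ (using $\norm E\le M\norm r$ small). It is at most the trace of that block, $\le 4x^2+4a^2y^2+2\lambda+2\norm E\le11\varepsilon^2$.

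The final part is the exact-center witness. With $u=u_0$ and $\ell=\ell_0$ we have $d=(0,\varepsilon,0)$, $Y=(0,\varepsilon^2)$ and $Q(d)=0$.

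I would take $\theta_\lambda=(0,X,0)$ with $X^2=\varepsilon^2-\tau$, giving residual $r=-\tau<0$. The Hessian is diagonal, with $y$-entry $\lambda-2a\tau$. Choosing $\tau=\lambda/a$ makes this entry $-\lambda<0$.

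This choice keeps the point low cost. The residual cost is $\tau^2/2=\lambda^2/(2a^2)\le C\lambda\varepsilon^2/2$, using $\lambda\le Ca^2\varepsilon^2$. The displacement is $\varepsilon-X\approx\tau/(2\varepsilon)$, which contributes only $O(\lambda^3/(a^2\varepsilon^2))$. Hence $Q<C\lambda\varepsilon^2$.

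For the gradient, $\nabla Q=(0,\,2Xr+\lambda(X-\varepsilon),0)$, whose magnitude is $2X\lambda/a+O(\lambda^2/(a\varepsilon))$. Since $X\in[\sqrt3\varepsilon/2,\varepsilon]$, this lies between $\sqrt3\lambda\varepsilon/a$ and $2\lambda\varepsilon/a$, after checking that the $\lambda(X-\varepsilon)$ term has the helping sign. Because $r<0$ and $X<\varepsilon$, both terms are negative, so the lower bound is immediate. The upper bound uses $X<\varepsilon$ together with the small correction term.
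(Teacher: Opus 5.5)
Your proposal is correct and follows the paper's proof essentially step for step: the same localization, the same tangent computation giving $K\le4a$ and $B\le4a/\varepsilon$, the same block-diagonal eigenvalue argument, and the same witness with $X^2=\varepsilon^2-\lambda/a$. The differences are minor: you keep the interpolator's third coordinate at $\varepsilon u_3$ rather than $0$, and you bound $\lambda_2$ by the block trace rather than by Weyl, and both work. There is one small slip in your interpolation budget. The first-coordinate displacement is $|\varepsilon^2\ell_1+\varepsilon u_1|\le\varepsilon^2R_\ell+\varepsilon R_c$, not just $\varepsilon^2R_\ell$, and $\eta$ is bounded by $R_\ell+aR_c^2$ rather than by $R_\ell+a\varepsilon^2R_c^2$. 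Even with these corrections the total stays below $3R_c\varepsilon$, far inside $\sqrt{2C}\,\varepsilon$, so this step is routine rather than the fiddly one you anticipated.
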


The family also meets the weak-jet premise with reference
$F(v,x,y)=(v,x^2)$ at $c=0$ whenever $2a\le\kappa$. Its
indexed ridge and weak bands in (\ref{eq:productbands}) need not be
separated when $\lambda$ is of order $\varepsilon^2$.
The calculations are in Appendix~\ref{app:worked}.

\subsubsection{Proof for the product quadratic}\label{app:worked}

Write $R=\sqrt C/8$ and $h=R+\sqrt{2C}<1/4$.
The cost cap gives $\norm{\theta-d}\le\sqrt{2C}\varepsilon$.
Since $\norm{d-\varepsilon u_0}\le R\varepsilon$, every low-cost
point lies in the stated tube. There $x\ge(1-h)\varepsilon$ and
$|y|\le h\varepsilon$. The Jacobian rows are orthogonal:
\[
 DG_a=\begin{pmatrix}1&0&0\\0&2x&2ay\end{pmatrix}.
\]
Thus its singular values are $1$ and
$2\sqrt{x^2+a^2y^2}$. Since $a\le1/4$ and $h<1/4$,
$\varepsilon\le2\sqrt{x^2+a^2y^2}\le3\varepsilon<1$.

The point
\[
 \theta_{\rm int}=(-\varepsilon^2\ell_1,
                     \varepsilon\sqrt{-\ell_2},0)
\]
matches $Y$ exactly. The label ball ensures $-\ell_2>0$.
The three coordinate differences from $d$ are bounded by
$(5R/4,2R,R)\varepsilon$. Hence
$\norm{\theta_{\rm int}-d}<3R\varepsilon$ and
$Q(\theta_{\rm int})<9C\lambda\varepsilon^2/128$.
For $\lambda>0$, $Q$ is coercive, so a global minimizer exists
and has a value below the cap.

A unit tangent has direction $(0,-ay,x)$. Direct contraction gives
\[
 K=\frac{2a(x^2+ay^2)}{x^2+a^2y^2},\qquad
 B=\frac{a(x^2+ay^2)}{(x^2+a^2y^2)^{3/2}}.
\]
Because $|y/x|<1$ on the tube, $K\le4a$ and
$B\le4a/\varepsilon$. For $a>0$ and
$\norm b\le\lambda\varepsilon/(32a)$,
\[
 B\norm b\le\lambda/8,
 \qquad B\lambda\norm{\theta-d}
 \le4a\sqrt{2C}\lambda<\lambda/8.
\]
Theorem~\ref{thm:point} applies with $s=1$, $M=2$;
$C\le1/1152<C_*(1,2)=1/512$. If $a=0$, then $K=B=0$.
Also $K\sqrt{2C\lambda}\varepsilon\le\lambda/4$ whenever
$\lambda\ge512Ca^2\varepsilon^2$.

The $v$ coordinate is a Hessian block $1+\lambda$.
The $(x,y)$ block has $J_{xy}^TJ_{xy}$ with nonzero eigenvalue
between $\varepsilon^2$ and $9\varepsilon^2$. Since
$\norm E\le2\sqrt{2C\lambda}\varepsilon\le\varepsilon^2/12$
and $\lambda\le\varepsilon^2$, its largest eigenvalue is
less than $11\varepsilon^2<1$, while its larger eigenvalue is
at least $\varepsilon^2-\norm E>\varepsilon^2/2$.
For a certified point, Theorem~\ref{thm:point} gives the lower
bound on its smaller eigenvalue. The tangent Rayleigh quotient
gives its upper bound $\lambda+\lambda/4$.
Thus (\ref{eq:productbands}) holds; in particular, the $v$
eigenvalue is the largest. A global minimizer has $b=0$ and
is certified by the same gate.

For the exact-center tuple let
$\tau=\lambda/(a^2\varepsilon^2)\le C$ and put
\[
 \theta_\lambda=(0,\varepsilon\sqrt{1-a\tau},0).
\]
Then $r=(0,-\lambda/a)$ and
\[
 \frac{Q(\theta_\lambda)}{\lambda\varepsilon^2}
 =\frac\tau2+
 \frac{(1-\sqrt{1-a\tau})^2}{2}<C.
\]
The strict inequality follows from
$1-\sqrt{1-a\tau}\le a\tau$ and $a\le1/4$.
The displacement is less than $a\tau\varepsilon<\varepsilon/8$.
The gradient and full Hessian are
\[
 \norm b=\lambda(2x/a+\varepsilon-x),\qquad
 H=\operatorname{diag}(1+\lambda,
          4\varepsilon^2-6\lambda/a+\lambda,-\lambda).
\]
Since $\sqrt3\varepsilon/2\le x\le\varepsilon$, the gradient
lies in the claimed interval. Also
$H_{xx}\ge(4-6aC)\varepsilon^2\ge(5/2)\varepsilon^2$.
This proves indefiniteness for every ridge in the stated interval.

For
$F(v,x,y)=(v,x^2)$ and $c=0$, retain the first output row.
Then $L=0$, $q(z_x,z_y)=-z_x^2$,
$q(1,0)=\chi(0,-1)=-1$, and $Dq(1,0)$ is onto.
The Hessian discrepancy on the kernel is $2a$, so the bridge
assumptions hold when its chosen tolerance obeys $2a\le\kappa$.

\subsection{Indexed Hessian spectrum}\label{app:indexed-spectrum-proof}
We prove Corollary~\ref{cor:family} using the Jacobian bands and
the pointwise certificate.
\begin{proof}
Min--max on the tangent space gives the upper bound $5\lambda/4$ on its
$p-n$ indices. The normal compression is at least
$s^2\varepsilon^2 I/2$, so interlacing gives the lower weak bound.
Weyl's inequality gives the upper weak bound
$(b_w^2+1+M\sqrt{2C})\varepsilon^2$, and the strong bounds follow from
the fixed strong singular values and $\norm E=O(\varepsilon^2)$.
Empty bands are omitted. Lemma~\ref{lem:bridge} provides the hypotheses
from cost alone. The gradient implication uses the envelope
$B\le M/(s\varepsilon)$. Finally $Q\ge\lambda\norm{\theta-d}^2/2$
is coercive, so an exact strict-cap interpolator supplies a strict-cap
global minimizer. Its gradient is zero and it passes the adaptive test.
\end{proof}

\end{document}